\definecolor{newcolor}{rgb}{.8,.349,.1}
\newtheorem{mathdef}{Definition}
\newtheorem{theorem}{Theorem}
\newtheorem{lemma}[theorem]{Lemma}
\theoremstyle{definition}
\newtheorem{exmp}{Example}[section]
\newcommand{\distas}[1]{\mathbin{\overset{#1}{\kern\z@\sim}}}%
\newsavebox{\mybox}\newsavebox{\mysim}
\newcommand{\distras}[1]{%
  \savebox{\mybox}{\hbox{\kern3pt$\scriptstyle#1$\kern3pt}}%
  \savebox{\mysim}{\hbox{$\sim$}}%
  \mathbin{\overset{#1}{\kern\z@\resizebox{\wd\mybox}{\ht\mysim}{$\sim$}}}%
}
\DeclareMathOperator{\EX}{\mathbb{E}}%
\DeclareMathOperator{\Indicator}{\mathbbm{1}}%
\DeclarePairedDelimiterX{\infdivx}[2]{(}{)}{%
  #1\;\delimsize\|\;#2%
}
\newcommand{\slfrac}[2]{\left.#1\middle/#2\right.} %
\begin{document}

\twocolumn[
  \begin{@twocolumnfalse}

\begin{center}

{\LARGE \textbf{Do Compressed Representations Generalize Better?}}

\vskip30pt

Hassan Hafez-Kolahi ~~ Shohreh Kasaei* ~~ Mahdieh Soleymani-Baghshah \\
\vskip10pt
  Sharif University of Technology,
  Tehran, 
  Iran\\
\texttt{hafez@ce.sharif.edu, skasaei@sharif.edu, soleymani@sharif.edu} 

\end{center}

\section*{Abstract}

One of the most studied problems in machine learning is finding reasonable constraints that guarantee the generalization of a learning algorithm. These constraints are usually expressed as some \emph{simplicity} assumptions on the target. For instance, in the Vapnik-Chervonenkis (VC) theory the space of possible hypotheses is considered to have a limited VC dimension and in kernel methods there are assumptions on the spectrum of the operator in the Hilbert space. One way to formulate the simplicity assumption is via information theoretic concepts. In this paper, the constraint on the entropy $H(X)$ of the input variable $X$ is studied as a simplicity assumption. It is proven that the sample complexity to achieve an $\epsilon$-$\delta$ Probably Approximately Correct (PAC) hypothesis is bounded by $\frac{2^{\slfrac{6H(X)}{\epsilon}}+\log{\frac{1}{\delta}}}{\epsilon^2}$ which is sharp up to the $\frac{1}{\epsilon^2}$ factor. Morever, it is shown that if a feature learning process is employed to learn the compressed representation from the dataset, this bound no longer exists. These findings have important implications on the Information Bottleneck (IB) theory which had been utilized to explain the generalization power of Deep Neural Networks (DNNs), but its applicability for this purpose is currently under debate by researchers. In particular, this is a rigorous proof for the previous heuristic that compressed representations are exponentially easier to be learned. However, our analysis pinpoints two factors preventing the IB, in its current form, to be applicable in studying neural networks. Firstly, the exponential dependence of sample complexity on $\slfrac{1}{\epsilon}$, which can lead to a dramatic effect on the bounds in practical applications when $\epsilon$ is small. Secondly, our analysis reveals that arguments based on input compression are inherently insufficient to explain generalization of methods like DNNs in which the features are also learned using available data.

\vskip 0pt
\noindent \textbf{Keywords:} Compressed Representation; Generalization Bound;  Information Bottleneck.

\vskip 20pt

\end{@twocolumnfalse}
]

\section{Introduction}
The main objective of \emph{learning} is to develop algorithms which can learn general patterns by using a finite number of samples drawn from a target distribution.
The \emph{"no free lunch"} theorem states that if there is no constraint on the distribution, it is impossible to say anything about the samples not seen in the training set \citep{wolpert_lack_1996}. There are various ways to define such constraints. These constraints are usually expressed as an assumption on the simplicity of the target function. %
For example, the Vapnic-Chervonenkis (VC) theory  \citep{VapnikUniformConvergenceRelative1971} defines the concept of the VC dimension which assigns a value %
to each hypotheses set stating how simple/hard it is to learn a hypothesis from that set (learning a harder problems requires more training samples). To use this theory, one needs to assume a hypothesis set with a small VC dimension to learn when a limited number of samples is given.

Another approach for defining and precising the simplicity constraint is to use the information theory. Loosely speaking, in information theoretic terms, an entity is considered simple if it can be described by a few number of bits. One of the main concepts in the information theory is that the expected number of bits required to transmit/save a discrete random variable $X$ with the probability mass function $p(x)$ is controlled by its entropy $H(X)=\EX[-\log p(x)]$ \citep{CoverElementsinformationtheory2012}. Similarly, the mutual information $I(X;Y)=\EX[\log \frac{p(x,y)}{p(x)p(y)}]$ between the two random variables $X$ and $Y$ with the joint distribution $p(x,y)$ can be interpreted as the %
number of bits that each of these random variables contains about the other. 

\label{sec:section_where_IB_is_introduced}
The Information Bottleneck (IB) is a generalization of the sufficient statistics technique proposed in \cite{SlonimDocumentclusteringusing2000} for extracting relevant information from the input random variable $X$ about the target random variable $Y$. It is formulated as %
\begin{equation}
\min_{p(\hat{x}|x)} I(X;\hat{X}) -\beta I(Y;\hat{X})
\end{equation}
where $\hat{x}$ is the extracted feature. %
In a machine learning problem, when using IB as a feature extractor, there are two components, i) encoding the input $X$ to the compressed feature $\hat{X}$ and ii) decoding $\hat{X}$ to find an estimate $\hat{Y}$ of the target variable $Y$. IB has been used in various applications; e.g., time series prediction \citep{XuTimeSeriesPrediction2018}, clustering %
\citep{StrouseInformationBottleneckGeometric2019}, image reconstruction \citep{GhimireImprovingGeneralizationDeep2019}, reinforcement learning %
\citep{GoyalInfoBotTransferExploration2019}, and  distributed learning \citep{FarajiparvarInformationBottleneckMethods2019}.

IB is also employed in the learning theory to harness the mutual information as a measure of simplicity \citep{ShamirLearninggeneralizationinformation2008}. The main underlying intuition is that IB improves the generalization by using $\hat{X}$ which is a compressed (simpler) version of $X$, i.e. it groups the values of $X$ in clusters while trying to preserve the information about $Y$ in each cluster (more precisly, in the general case, where the mapping from $x$ to $\hat{x}$ is stochastic, it will be a soft clustering). Consequently, the problem is reduced to %
estimating the target variable in each cluster which is more robust since each cluster has more data  \citep{SlonimDocumentclusteringusing2000}. 

Recently, this idea has been utilized to explain the reason behind the extraordinary effectiveness of deep neural networks which is considered to be hard to explain %
by using traditional learning theory concepts \citep{ZhangUnderstandingdeeplearning2017}. 
The work in \cite{TishbyDeepLearningInformation2015} put forward the application of IB as a tool to understand deep neural networks %
utilizing information plane diagrams. Then, authors found  empirical evidence %
suggesting that deep neural networks are implicitly solving the IB optimization, i.e. compressing the input while retaining the information about the output \citep{Shwartz-ZivOpeningBlackBox2017}. This idea has been further %
examined 
by %
researchers to understand deep neural networks \citep{KhadiviFlowinformationfeedforward2016,ChengEvaluatingCapabilityDeep2018,ChengUtilizingInformationBottleneck2019,LiuImplementationverificationInformation2018,YuUnderstandingConvolutionalNeural2018} and to improve them \citep{AlemiDeepVariationalInformation2017,NguyenLayerwiselearningstochastic2017,WangDeepMultiviewInformation2019}.

Despite these positive results, the role of IB theory to explain the effectiveness of DNNs is  %
now under debate.
The work in \cite{SaxeInformationBottleneckTheory2018} practically challenged the idea that the information is lost in common DNNs which use ReLU activation functions and claimed that the information loss reported by \cite{Shwartz-ZivOpeningBlackBox2017} might be  due to applying double saturated activation functions (e.g. hyperbolic tangent) and the binning process to estimate the mutual information. 
In another work, \cite{AmjadHowNotTrain2018} argued that the IB optimization is ill-posed for usual scenarios in which the input space is continuous and stated that the previous improvements caused by the application of IB to train DNNs (e.g., \citep{AlemiDeepVariationalInformation2017,KolchinskyNonlinearInformationBottleneck2017}) are due to the fact that they use variational replacement for IB. %
They argued "representational simplicity" and "robustness" are achieved as a positive side effect of this replacement and also some other techniques used along the way. 
More recently, it was theoretically proven that in a common deterministic neural network with any strictly monotone activation function, mutual information is either infinite (continuous input) or constant (discrete input) thus refuting %
applicability of IB to explain %
the generalization power of these models  \cite{GoldfeldEstimatingInformationFlow2019}. 

Various suggestions has been made to remedy these problems, such as explicit noise addition to make the neural network stochastic and lossy \citep{GoldfeldEstimatingInformationFlow2019}, and  replacing the cost function with a related but more well behaved one \citep{KolchinskyCaveatsinformationbottleneck2019} (please see \cite{AmjadLearningrepresentationsneural2019} for a list of these remedies).

In these works, the focus is usually on how to change the method in a way that the promised compression actually happens, but the main idea of why it is desired to compress the input in the first place is less studied. %

Specifically, we found %
that in spite of current hot debates on applicability of IB in %
reasoning about the generalization of DNNs, theoretical studies of this method in the learning theory setting are scarce. In IB literature, it is %
generally considered that "compressed representations generalize better" \citep{WuLearnabilityInformationBottleneck2019}. 
The validity of this proposition %
might be attributed to the results of \cite{ShamirLearninggeneralizationinformation2010} (and the more recent work of \cite{VeraRoleInformationBottleneck2018}) which discuss the generalization properties of IB. However, %
those results are not used explicitly in the discussions surrounding the applicability of IB to analyze the generalization of DNNs. Indeed it was recently demonstrated by \cite{RodriguezGalvezInformationBottleneckConnections2019} that those results are not usable in many scenarios where DNNs are used (such as image classification) since the constants appearing in the bounds are too large (see Section~\ref{sec:section_including_the_previous_bounds} for a full discussion of these results and their implications%
).

It seems that the actual factors promoting the IB as an appealing theory are these two intuitive ideas
i) Occam's razor which states that simpler models generalize better and ii) the concise and ubiquitous %
terminology of information theory in defining simplicity as the number of bits needed to encode a random variable.
However, the relation between these factors and  the generalization was not precisely quantified.%

Recently, Tishby put forward the idea that reducing $H(x)$ can exponentially decrease the number of required samples for achieving a certain accuracy \citep{TishbyStanfordSeminarInformation2018}. 
This exponential decrease in error has the potential to make this theory the first theory of IB usable in practice. But to the best of our knowledge, the idea was never proved theoretically and remained in the intuition arena. %

In this paper, some steps are taken to reduce this theoretical gap. First of all, this intuitive idea that having simpler features (in the information-theoretic sense) 
is beneficial for learning is studied and a tight generalization bound is derived. 
The bound shows exponential dependence on the number of bits when the features are not learned using the dataset. This partly confirms the previous idea, but the analysis shows that there is also a non-eliminable exponential dependence on the desired error. %
Moreover, we show %
that such bounds do not exist when the features are learned using the dataset (which is always the case in DNNs where all layers are trained using back propagation). In other words, even though having low entropy features is beneficial for generalization, when these features are learned from training data, the corresponding generalization bounds can't be used. This also conforms with the previous theoretical results and explain the  intrinsic  cause behind the inapplicability of those bounds in practice. %
We show %
that this is not a side effect of the used theoretical techniques and indeed when the %
features are learned, the limited entropy of the learned features %
can not put enough constraints to guarantee generalization anymore. Thus, %
we argue that this problem can not be eliminated completely unless some extra constraints are incorporated into IB.  %

In the remaining parts of this paper, the main contributions are first summarized in Section~\ref{sec:summary_of_results}.
The results are presented in the next four sections. %
First, the definitions are discussed in Section~\ref{sec:definitions}. 
Then, in Section~\ref{sec:sample_complexity_bounds}, the sample complexity bound $\frac{2^{\slfrac{6H(X)}{\epsilon}}+\log{\frac{1}{\delta}}}{\epsilon^2}$ for the case %
in which the features are fixed is studied. %
After that, in Section~\ref{sec:tightness}, it is shown that the $\frac{1}{\epsilon}$ can not be eliminated and thus there are learning problems with small entropy which are impossible to be learned by limited number of samples available in practice.
Next, in Section~\ref{sec:comparison_with_previous_results} the case where the features are learned using the dataset is studied and it is shown that there is no exponential benefit in this case. %
Finally, the implications of these results are discussed in Section~\ref{sec:discussion}.

\section{Summary of Contributions}
\label{sec:summary_of_results}
Contributions of this paper are three-fold:
\begin{enumerate}
\item \textbf{\small Sample complexity bound for entropy limited distributions}
The bound of $\frac{2^{\frac{6H(X)}{\epsilon}}+\log{\frac{1}{\delta}}}{\epsilon^2}$ is presented for the sample complexity of entropy limited distributions, where %
$H(X)$ is the entropy of the input random variable and $\epsilon$ is the desired error (see Theorem~\ref{theorem:MainTheorem}). This bound shows that there is an exponential dependence on $H(X)$. 
This is a proof for the idea
presented
in \cite{TishbyStanfordSeminarInformation2018} that %
suggests reducing $H(x)$ can exponentially decrease the number of required samples for achieving a certain accuracy. %
A surprising observation made possible by this rigor analysis is the exponential dependence on the term $\frac{1}{\epsilon}$ which, to the best of our knowledge, has not been mentioned in the literature before. 

\item {\bfseries \small Showing existence of hard entropy limited learning Problems}
Indeed %
 we prove that 
there are some distributions which almost meet the aforementioned bound. More precisely, we show that there exist heavy tailed distributions with entropy $H(X)$ which are impossible to be learnt with error $\epsilon$, if the number of samples is less than $2^{\frac{H(X)-1}{\epsilon}}$.
The important implication of this result is that the exponential dependence on the term $\frac{1}{\epsilon}$ is %
intrinsic to the problem and is not a consequence of our approach. This dependence can drastically influence the number of required samples. To see that, consider the case where one wants to achieve the error of at most $\epsilon=0.01$. In this case, even if $H(X)=2$, the number of required samples can be as high as $2^{100}$, which is not reachable in practice. 
This result shows that even if the entropy of the input distribution is small, in order to guarantee arbitrary small error, one might need huge number of samples.
\item {\bfseries \small Separate study of encoder/decoder generalization } 
Our analysis is based on separate study of feature extraction and label estimation from extracted features. In the IB literature, these two subprocesses are called "encoding" and "decoding" respectively, where features are first computed using the encoder and then the label is estimated by the decoder (i.e. it's similar to a classic rate-distortion problem, but the distortion measure is defined as the error of reconstructing the target variable $Y$). 
It is shown that the bound is just valid if the encoder is independent of the training set. If the encoder is learned, the constraint on having a low entropy feature is no longer enough to guarantee generalization. This shows that IB theory is not enough to explain the generalization of DNNs in which the features are also learned. 
\end{enumerate}

\section{Definitions}\label{sec:definitions}
Consider the discrete input space $\mathcal{X}$, the output space  $\mathcal{Y}=\{0,1\}$, an unknown distribution $\rho(x,y)=\rho(x)\rho(y|x)$ on $\mathcal{X}\times\mathcal{Y}$, and a given training set $S=\{(x_i,y_i)\}_{i=1}^N$ %
containing
i.i.d. samples from $\rho(x,y)$, where $N$ is the number of training samples. Suppose the entropy of $\rho(X)$ is $H(X)<\infty$. The goal is to learn a function $f_S:\mathcal{X}\to \mathcal{Y}$ from the training data which minimizes the true zero-one cost $R(f_S)=\EX_\rho[\Indicator(f_S(x)\ne y)]$. However, the training algorithm has only access to the empirical error $R_S(f_S)=\sum_{i=1}^N\Indicator(f_S(x_i)\ne y_i)$ and minimizes it. The generalization gap shows the difference between the empirical error and the true error. It is said that an algorithms achieves $\epsilon$-$\delta$ generalization if 
\begin{equation}
\label{eq:prob_error_epsillon_delta}
\Pr_S\left(\;\left|R\left(f_S\right)-R_S\left(f_S\right)\right|\ge \epsilon\right) \le \delta.
\end{equation}
This characterization of the learning problem is called Probably Approximately Correct (PAC) learning. One of the goals in this analysis is finding the sample complexity bounds for a family $\mathcal{M}$ of distributions. A sample complexity of a learning algorithm is a function  $c_\mathcal{M}(\epsilon,\delta)$, if for all $\rho \in \mathcal{M}$,
$N\ge c_\mathcal{M}(\epsilon,\delta) \Rightarrow \Pr_S\left(\;\left|R\left(f_S\right)-R_S\left(f_S\right)\right|\ge \epsilon\right) \le \delta$.

\section{Sample Complexity Bound}\label{sec:sample_complexity_bounds}
In this section, 
the sample complexity bound for entropy-constrained learning problems is presented. 
In Section \ref{sec:tightness}, the tightness of the achieved bound is studied and it is shown that there are distributions which almost meet this bound.

First, let's sketch the main steps needed to achieve the sample complexity bound of 
\begin{equation}\label{eq:sample_complexity}
c_\mathcal{M}(\epsilon,\delta)= \frac{2^{\frac{6H(X)}{\epsilon}}+\log{\frac{1}{\delta}}}{\epsilon^2}
\end{equation}
where $\mathcal{M}$ is the set of discrete distributions with the entropy of at most $H(X)$.

Comparing Eq. (\ref{eq:sample_complexity}) with the well-known sample complexity of learning with finite hypothesis class $\mathcal{H}$, which is $\frac{\log|\mathcal{H}| +\log{\frac{1}{\delta}}}{\epsilon^2}$, suggests that in some specific sense $2^{2^{\frac{6H(X)}{\epsilon}}}$ can be the effective number of hypotheses when working with entropy-limited distributions. 
Following this observation, the core idea of the proof is to show that the set of all hypotheses $f:\mathcal{X}\to\{0,1\}$ can be partitioned to $2^{2^{\frac{6H(X)}{\epsilon}}}$ subsets %
 such that the probability of the "bad" event $E_{\epsilon}=\Indicator(\left|R_S(h_S)-R(h_S)\right|\ge \epsilon)$ is almost the same for all of the elements in each partition. Then, we assign a center to each partition and replace the learning algorithm by another one that outputs the center of the partition to which the learned hypothesis belongs. Since this new learning algorithm works with a finite size hypothesis space, it is guaranteed to have generalization bounds relative to its size. There are two main difficulties in this approach, i) %
finding the right partitioning function and ii) bounding the error of the original algorithm by the error of the surrogate algorithm working with partition centers.

This partitioning technique (which will be made precise %
later), is related to the covering of hypothesis spaces which is a widely used concept in the statistical learning theory \citep{shalev-shwartz_understanding_2014}. 
On the other hand, in information theory various tools are specifically developed
for effectively counting the probable possibilities. In particular, the sphere covering is used in counting the number of required bits in rate-distortion problem \citep{CoverElementsinformationtheory2012}. 
Based on intuitions from these information theoretic results, there is a heuristic stating that the effective number of hypotheses should be about $2^{2^{H(X)}}$ \citep{TishbyStanfordSeminarInformation2018}. 
The reason is the loosely stated theorem that indicates a random variable with entropy $H(X)$ can be represented by $H(X)$ bits. Therefore, effectively, there are $2^{H(X)}$ values which one needs to consider instead of $X$s. %
Finally, the number of all functions on those values will be $2^{2^{H(X)}}$. Unfortunately, this simple intuitive reasoning %
differs from the true answer by an important $\frac{1}{\epsilon}$ factor on the exponent of the exponent! (the root of this mistake is explored thoroughly in Remark~1 at the end of this section and in Appendix~\ref{sec:appendix_factorized_distribution}) %
In the remaining parts of this section, the more elaborate approach to analyze this problem is described. After that the root of the fallacy in previous arguments is discussed (see Remark~1).

To begin the discussion, some lemmas are presented. First, it is proven that it is possible to bound the probability of $\epsilon$ error (Eq. (\ref{eq:prob_error_epsillon_delta})) by replacing the set of all functions $\mathcal{F}$ with a finite set $\mathcal{F}_g\in\mathcal{F}$. This is done by using a partitioning functional $g:\mathcal{F}\to\mathcal{F}_g$ which estimates every function in $\mathcal{F}$ with a function in $\mathcal{F}_g$ (see Fig.~\ref{fig:NNMC}). Lemma \ref{lemma:3part_error_decomposition} shows that the error can be controlled by, i) the size of $\mathcal{F}_g$ and ii) the error introduced by replacing $f$s with $g(f)$s.

\begin{lemma}[Three Parts Error Decomposition]
\label{lemma:3part_error_decomposition}
Let $g:\mathcal{F}\to\mathcal{F}_g$ be an arbitrary partitioning functional which maps $\mathcal{F}$ to the finite sized set $\mathcal{F}_g\subset \mathcal{F}$ (the set of \emph{centers}), and $h_S=\mathcal{A}(S)$ be the learned hypothesis by the learning algorithm. The generalization gap can be decomposed as
\begin{align}
\label{eq:3part_error_decomposition}
\Pr\{ | R_S(h_S)-&R(h_s) | \ge 3\epsilon \}
\\
\le \;\;\Pr \{& \;\; |R_S(g(h_S))-R(g(h_S))|\ge\epsilon \;\;\}\nonumber\\
+\Pr \{& \;\; |R(g(h_S))-R(h_S) |\;\;\;\;\; \ge\epsilon \;\;\}\nonumber\\
+ \Pr \{& \;\; |R_S(g(h_S))-R_S(h_S)| \;\;\ge\epsilon \;\; \}\nonumber.
\end{align}

\end{lemma}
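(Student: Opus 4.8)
The plan is a pointwise triangle inequality followed by a union bound; no structural hypothesis on the partitioning functional $g$, on $\mathcal{F}_g$, or on the algorithm $\mathcal{A}$ is needed. First I would fix a realization of the training set $S$, which fixes the (random) learned hypothesis $h_S$ and hence its center $g(h_S)\in\mathcal{F}_g$. Inserting $g(h_S)$ as an intermediate point and using the triangle inequality for $|\cdot|$ twice, for every such $S$ one gets
\begin{equation}
\label{eq:triangle_decomp}
|R_S(h_S)-R(h_S)| \le |R_S(h_S)-R_S(g(h_S))| + |R_S(g(h_S))-R(g(h_S))| + |R(g(h_S))-R(h_S)|.
\end{equation}

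Next I would invoke the elementary observation that if the left-hand side of Eq.~(\ref{eq:triangle_decomp}) is at least $3\epsilon$, then at least one of the three nonnegative summands on the right must be at least $\epsilon$: if all three were strictly below $\epsilon$, their sum would be strictly below $3\epsilon$, contradicting the hypothesis. Phrased as a statement about the random $S$, this yields the event inclusion
\begin{align}
& \{|R_S(h_S)-R(h_S)|\ge 3\epsilon\} \nonumber\\
& \quad \subseteq\; \{|R_S(g(h_S))-R(g(h_S))|\ge\epsilon\} \nonumber\\
& \qquad\quad \cup\; \{|R(g(h_S))-R(h_S)|\ge\epsilon\} \nonumber\\
& \qquad\quad \cup\; \{|R_S(g(h_S))-R_S(h_S)|\ge\epsilon\}.
\end{align}
Applying $\Pr$ to both sides together with subadditivity (the union bound) then gives exactly Eq.~(\ref{eq:3part_error_decomposition}), which completes the argument.

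There is essentially no obstacle: the only fact used is that, once $S$ is fixed, $R$ and $R_S$ are deterministic real-valued functionals of a hypothesis, so the triangle inequality applies verbatim even though $g(h_S)$ itself depends on the data. The content of the lemma is organizational — it isolates the three sources of error that are handled separately afterwards: (i) the generalization gap of the surrogate hypothesis $g(h_S)$, which will later be controlled by a finite-class union-bound argument over the finite set $\mathcal{F}_g$; and (ii)--(iii) the two "quantization" discrepancies caused by replacing $h_S$ with its center $g(h_S)$, measured under the true distribution $\rho$ and under the empirical distribution of $S$ respectively. These last two terms are precisely what a good choice of the partitioning functional $g$ must be engineered to keep below $\epsilon$.
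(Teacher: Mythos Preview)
Your proof is correct and matches the paper's own argument essentially step for step: both insert $g(h_S)$ as an intermediate point, apply the triangle inequality, observe that a sum of three nonnegative terms reaching $3\epsilon$ forces at least one term to reach $\epsilon$, and finish with a union bound. The only cosmetic difference is that the paper writes the telescoping inside one probability expression before splitting, whereas you first state the pointwise inequality and then pass to the event inclusion.
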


\begin{proof}
\begin{align*} 
\Pr\{ | R_S(h_S)-&R(h_s) | \ge 3\epsilon \}\\
=\Pr \{ \;\; |&R_S(h_S)-R_S(g(h_S)) \\
+ &R_S(g(h_S))-R(g(h_S))\\
+ &R(g(h_S))-R(h_S) |  \;\; \ge 3\epsilon \;\;\}\\
\le \Pr \{ \;\; &|R_S(h_S)-R_S(g(h_S))| \\
+ &|R_S(g(h_S))-R(g(h_S))|\\
+ &|R(g(h_S))-R(h_S) |  \;\; \ge 3\epsilon \;\;\}\\
\le \Pr \{ \;\; &|R_S(h_S)-R_S(g(h_S))| \;\;\ge\epsilon \\
\cup \; &|R_S(g(h_S))-R(g(h_S))|\ge\epsilon\\
\cup \; &|R(g(h_S))-R(h_S) |\;\;\;\;\; \ge\epsilon \;\;\}\\
\le \;\;\Pr \{& \;\; |R_S(h_S)-R_S(g(h_S))| \;\;\ge\epsilon \;\; \}\\
+ \Pr \{& \;\; |R_S(g(h_S))-R(g(h_S))|\ge\epsilon \;\;\}\\
+\Pr \{& \;\; |R(g(h_S))-R(h_S) |\;\;\;\;\; \ge\epsilon \;\;\}\\
\end{align*}
\end{proof}

\begin{figure}
\centering
\includegraphics[width=0.7\linewidth]{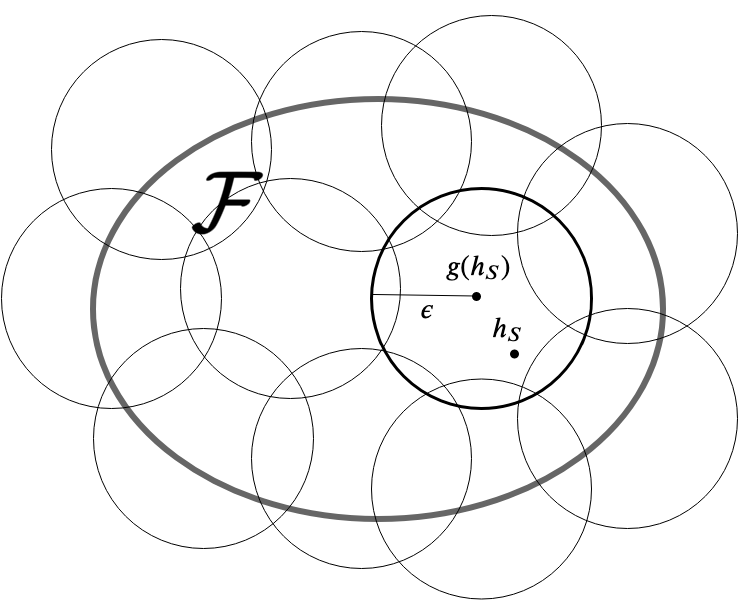}
\caption{Covering the space of functions. Schematic view of the space of functions $\mathcal{F}:\mathcal{X}\to\mathcal{Y}$ augmented with a metric based on the difference of error of hypotheses.  The large space $\mathcal{F}$ is covered with a finite set of balls of radius $\epsilon$. A learned hypothesis $h_S$ is mapped to the nearest center $g(h_S)$. For a target error of $\epsilon$, the effective number of hypotheses is related to the number of balls (as is made precise in Lemma~\ref{lemma:3part_error_decomposition}).}
\label{fig:NNMC}
\end{figure}

It should be emphasized that this lemma provides a general tool to study generalization.
The strength of this result comes from the fact that the partition function can depend on the distribution 
of data %
(though it cannot depend on the training set). 
Thus, in order to analyze a
generalization gap for a family of distributions (in this case the entropy-limited distributions),
one can try to see if it is possible to suitably partition the function space for each of the distributions in the family.

Note that the first term in the decomposition is the generalization gap of the learning algorithm $\mathcal{A}_g(S)=g(h_S)$. Since $\mathcal{A}_g(S)\in \mathcal{F}_g$, by using the classic generalization theorem for finite hypotheses sets, this term is bounded by $\left|\mathcal{F}_g\right| 2e^{-2N \epsilon^2}$. Thus, the first term is controlled by the number of centers. The next two terms are controlled by the effectiveness of the partitioning functional to assign a function $h_S$ to a center $g(h_S)$. More precisely, a good partitioning functional will assign a center to $h_S$ which has a similar error to it on both true distributions (second term) and the empirical distribution (third term). %

The remaining challenge is to find a good partition of the function space which can control all three terms. In order to do that, it is first shown that the second and third terms can be further bounded by a suitable distance measure in the space of functions. This makes it possible to reformulate the problem as finding a covering for the space $\mathcal{F}$ augmented with that distance. This is done in Lemma~\ref{lemma:bounding_error_by_probability_on_rho_x}, where it is shown that there is a general relation between %
zero-one loss differences of two functions and their (weighted) hamming distance.

\begin{lemma}
\label{lemma:bounding_error_by_probability_on_rho_x}
For every joint distribution $\rho(x,y)=\rho(x)\rho(y|x)$ and every pair of functions $f,f^\prime \in \mathcal{F}$, where $\mathcal{F}$ is the set of all binary classifiers $f:\mathcal{X}\to \{0,1\}$, we have
\begin{equation}\label{eq:bounding_error_by_probability_on_rho_x}
\left|R_\rho(f) -R_\rho(f^\prime)  \right| \le \Pr\nolimits_{\rho(x)}\{f(x)\ne f^\prime(x)\}
\end{equation}
where $R_\rho(f)=\Pr\nolimits_{\rho(x,y)}\{f(x)\ne y\}$.
\end{lemma}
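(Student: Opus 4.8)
The plan is to write the difference of the two true risks as a single expectation of a difference of indicator functions, and then exploit the fact that this difference is supported entirely on the disagreement set $\{x : f(x)\ne f'(x)\}$. This reduces the whole statement to an elementary pointwise comparison of indicators, so there is essentially no deep obstacle here; the only point that needs care is that the pointwise step genuinely uses the binary label set $\mathcal{Y}=\{0,1\}$.

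Concretely, I would start from $R_\rho(f)=\EX_{\rho(x,y)}\bigl[\Indicator(f(x)\ne y)\bigr]$ and write
\[
R_\rho(f)-R_\rho(f')=\EX_{\rho(x,y)}\bigl[\Indicator(f(x)\ne y)-\Indicator(f'(x)\ne y)\bigr].
\]
Taking absolute values and applying the triangle inequality (equivalently, Jensen's inequality for the convex function $|\cdot|$) moves the modulus inside:
\[
\bigl|R_\rho(f)-R_\rho(f')\bigr|\le\EX_{\rho(x,y)}\Bigl[\bigl|\Indicator(f(x)\ne y)-\Indicator(f'(x)\ne y)\bigr|\Bigr].
\]
Now fix a point $(x,y)$. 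If $f(x)=f'(x)$ the integrand is exactly $0$. If $f(x)\ne f'(x)$, then since $y,f(x),f'(x)\in\{0,1\}$ and $f(x)\ne f'(x)$, exactly one of $f(x),f'(x)$ equals $y$, so the integrand equals $1$. Hence, almost surely,
\[
\bigl|\Indicator(f(x)\ne y)-\Indicator(f'(x)\ne y)\bigr|=\Indicator(f(x)\ne f'(x)),
\]
which is in fact an identity (so the bound is as tight as possible at this level of generality). Substituting and noting that the right-hand side no longer depends on $y$ gives
\[
\bigl|R_\rho(f)-R_\rho(f')\bigr|\le\EX_{\rho(x,y)}\bigl[\Indicator(f(x)\ne f'(x))\bigr]=\EX_{\rho(x)}\bigl[\Indicator(f(x)\ne f'(x))\bigr]=\Pr\nolimits_{\rho(x)}\{f(x)\ne f'(x)\},
\]
which is the claim.

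The hard part, if any, is purely bookkeeping: one must keep the argument at the level of individual points so that the vanishing on the agreement set can be used, and one must remember that for a label alphabet with more than two symbols only the inequality $\le 1$ holds on the disagreement set (still enough for the stated bound, but not for the identity). The payoff is that this inequality, applied once to the true distribution $\rho$ and once to the empirical distribution, lets us bound the second and third terms of Lemma~\ref{lemma:3part_error_decomposition} by the true-weighted and empirically-weighted Hamming distances between $h_S$ and its center $g(h_S)$ respectively, thereby turning the remaining task into constructing a covering of $\mathcal{F}$ under these distances.
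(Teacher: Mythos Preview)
Your proof is correct and follows essentially the same route as the paper. The only cosmetic difference is the order of operations: the paper keeps the absolute value outside, rewrites the expectation as a difference of the two probabilities $\Pr\{f(x)\ne y,\;f'(x)=y\}$ and $\Pr\{f(x)=y,\;f'(x)\ne y\}$, and then bounds $|p_1-p_2|\le p_1+p_2=\Pr\{f(x)\ne f'(x)\}$, whereas you push the modulus inside via Jensen and establish the pointwise identity $\bigl|\Indicator(f(x)\ne y)-\Indicator(f'(x)\ne y)\bigr|=\Indicator(f(x)\ne f'(x))$ directly.
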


\begin{proof}
\begin{align*}
|R_\rho(f) -R_\rho(f^\prime)  |&=\left|\EX_{x,y}\left[\Indicator( f(x)\ne y) - \Indicator( f^\prime(x)\ne y)\right]\right|\\
&=\left|\Pr\nolimits_{\rho(x)}\{f(x)\ne y \;\text{and}\; f^\prime(x)= y\}\right.\\
&\;\;\;-\left.\Pr\nolimits_{\rho(x)}\{f(x)= y \;\text{and}\; f^\prime(x)\ne y\}\right|\\
&\le \Pr\nolimits_{\rho(x)}\{f(x)\ne f^\prime(x)\}
\end{align*}
\end{proof}

Note that Lemma~\ref{lemma:bounding_error_by_probability_on_rho_x} has an important side benefit: it removed the dependence of the terms on $\rho(y|x)$ (note that the l.h.s of Eq. (\ref{eq:bounding_error_by_probability_on_rho_x}) depends on both $\rho(x)$ and $\rho(y|x)$ while the r.h.s depends only on  $\rho(x)$). This property becomes critical later since the bound can only depend on $\rho(x)$ (more precisely, it should only depend on the entropy $H(X)$). 

Now, we are ready to continue the quest of finding a suitable partitioning function. In order to do that, the main idea is to focus on the second term of the inequality (\ref{eq:3part_error_decomposition}) and find a partitioning which zeros out the second term while using a small number of centers. Later, it will be proven that the third term (which is an empirical counterpart %
of the second term) will also be controlled. Note that using Lemma~\ref{lemma:bounding_error_by_probability_on_rho_x} on the second term, the problem can be solved by finding an $\epsilon$-covering for the space $\mathcal{F}$ augmented with the distance function $d(f,f^\prime)=\Pr\nolimits_{\rho(x)}\{f(x)\ne f_g(x)\}$.

Therefore, the first two terms of Eq.(\ref{eq:3part_error_decomposition}) are controlled if  it is possible to find an $\epsilon$-covering  with a small size. %
It is possible as stated in Lemma 3.
	
\begin{lemma}%
\label{lemma:covering_entropy_limited_functions}
For every distribution $\rho(x)$ with finite entropy $H(X)$, for all $\epsilon\in(0,1)$, there is a set $\mathcal{F}_\alpha\subseteq \mathcal{F}$ with finite size $\left|\mathcal{F}_\alpha\right|\le 2^{2^{\frac{H(X)}{\epsilon}}}$, where $\forall f\in \mathcal{F};\; \exists f^\prime \in \mathcal{F}_\alpha;\; \Pr\nolimits_{\rho(x)}\{f(x)\ne f^\prime(x)\}\le \epsilon$. 
\end{lemma}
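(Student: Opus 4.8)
The plan is to build the covering set $\mathcal{F}_\alpha$ explicitly out of a small \emph{core} subset of the input space. First I would fix the threshold $\tau = 2^{-H(X)/\epsilon}$ and set $A = \{x\in\mathcal{X} : \rho(x)\ge \tau\}$, the set of inputs carrying non-negligible mass. Since the masses of the elements of $A$ are each at least $\tau$ and together sum to at most one, we immediately get $|A|\le 1/\tau = 2^{H(X)/\epsilon}$; in particular $A$ is finite.

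The key step is to verify that $A$ captures all but an $\epsilon$ fraction of the probability, i.e. $\Pr\nolimits_{\rho(x)}\{x\notin A\}\le\epsilon$. This is the only place where the entropy hypothesis is used quantitatively: for every $x\notin A$ we have $\log(1/\rho(x)) > H(X)/\epsilon$, and since each term $\rho(x)\log(1/\rho(x))$ is nonnegative we may discard the contribution of $A$ to obtain
\begin{align*}
 H(X) &= \sum_{x\in\mathcal{X}}\rho(x)\log\frac{1}{\rho(x)} \;\ge\; \sum_{x\notin A}\rho(x)\log\frac{1}{\rho(x)}\\
 &\ge \frac{H(X)}{\epsilon}\sum_{x\notin A}\rho(x) \;=\; \frac{H(X)}{\epsilon}\,\Pr\nolimits_{\rho(x)}\{x\notin A\}.
\end{align*}
Rearranging gives $\Pr\nolimits_{\rho(x)}\{x\notin A\}\le\epsilon$, with the degenerate case $H(X)=0$ (where $\rho$ is a point mass and $A$ is that single atom) handled separately and trivially.

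Finally I would take $\mathcal{F}_\alpha$ to be the set of all classifiers that are identically $0$ outside $A$; there are exactly $2^{|A|}\le 2^{2^{H(X)/\epsilon}}$ of them, and clearly $\mathcal{F}_\alpha\subseteq\mathcal{F}$. Given an arbitrary $f\in\mathcal{F}$, let $f'\in\mathcal{F}_\alpha$ agree with $f$ on $A$ and equal $0$ elsewhere; then $\{x : f(x)\ne f'(x)\}\subseteq\mathcal{X}\setminus A$, so $\Pr\nolimits_{\rho(x)}\{f(x)\ne f'(x)\}\le\Pr\nolimits_{\rho(x)}\{x\notin A\}\le\epsilon$, which is exactly the claimed covering property. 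I do not anticipate a genuine obstacle: the whole argument rests on the elementary observation in the middle step that a low-entropy distribution cannot hide more than $\epsilon$ mass among atoms that are each lighter than $2^{-H(X)/\epsilon}$; the only things to watch are keeping the logarithm in base $2$ consistent with the paper's convention (so that $2^{H(X)/\epsilon}$ is the right scale) and the trivial edge cases ($H(X)=0$, or $A=\mathcal{X}$ when the support is already small).
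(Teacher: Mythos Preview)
Your proposal is correct and is essentially identical to the paper's proof: the paper takes $\alpha = 2^{-H(X)/\epsilon}$, defines $\mathcal{X}_{\ge\alpha}=\{x:\rho(x)\ge\alpha\}$ (your set $A$), lets $\mathcal{F}_\alpha$ consist of all classifiers that vanish outside this set, and bounds $\Pr\{x\notin\mathcal{X}_{\ge\alpha}\}$ by the same computation you wrote out, which it simply names as Markov's inequality applied to the nonnegative random variable $\log(1/\rho(x))$ with mean $H(X)$. The only cosmetic difference is that the paper packages the map $f\mapsto f'$ as a named functional (the ``$\alpha$-probable input projection'') because it is reused in the subsequent lemma.
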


Before proving this lemma,  it is needed to define a suitable partitioning functional called \emph{$\alpha$-probable input projection}.

\begin{mathdef}[$\alpha$-probable input projection]\label{definition:alpha_partition_function}
Given a distribution $\rho(x)$ on the discrete set $\mathcal{X}$ and a function $f:\mathcal{X}\to\{0,1\}$, the $\alpha$-probable input projection is the functional $g_\alpha:\mathcal{F}\to\mathcal{F}_\alpha$ defined as
\begin{align}
\left(g_\alpha(f)\right)(x)=\left\{
\begin{aligned}
&f(x) \;\;\;& x\in\mathcal{X}_{\ge \alpha}\\
&0   \;\;\; & o.w.
\end{aligned}
\right.
\end{align}
where $\mathcal{X}_{\ge\alpha}=\set{x|\rho(x)\ge\alpha}$ is the set of high probable input values and $\mathcal{F}_\alpha$ is the range of $g_\alpha$ which is the set of all functions which are zero on $x\notin\mathcal{X}_{\ge\alpha}$.
\end{mathdef}

The partitioning functional $g_\alpha$ has a simple logic: just consider the value of function on the most probable elements of $\mathcal{X}$ and ignore all others. Therefore, using this functional, two functions are in the same partition iff they are the same on $\mathcal{X}_{\ge\alpha}$. %

It will be shown that by choosing the correct $\alpha$, the functional $g_\alpha$ is the suitable partitioning. 
It is done by first showing that $g_\alpha$ can be used to find an $\epsilon$-covering with the size less than $2^{2^{\frac{H(X)}{\epsilon}}}$ for the space $\mathcal{F}$ augmented with the distance function $d(f,f^\prime)=\Pr\nolimits_{\rho(x)}\{f(x)\ne f_g(x)\}$
 (Lemma~\ref{lemma:covering_entropy_limited_functions}), then bounding the error of such covering on the empirical distribution (Lemma~\ref{lemma:covering_error_on_empirical_distribution}).

\begin{proof}[Proof of Lemma~\ref{lemma:covering_entropy_limited_functions}]
It will be shown that using $f^\prime=g_\alpha(f)$ as defined for $\alpha$-probable input projection (Definition~\ref{definition:alpha_partition_function}) with $\alpha=2^{-\frac{H(X)}{\epsilon}}$ is enough to achieve this result.

Note that by definition $\forall x \in \mathcal{X}_\ge\alpha \Rightarrow f(x)=f^\prime(x)$. Therefore 
\begin{align*}
 \Pr\nolimits_{\rho(x)}\{f(x)\ne f^\prime(x)\}\le  \Pr\nolimits_{\rho(x)}\{x\notin \mathcal{X}_{\ge\alpha}\}.
\end{align*}
This probability can be bounded by using the Markov inequality
\begin{align}
\Pr\nolimits_{\rho(x)}\{x\notin \mathcal{X}_{\ge\alpha}\}
=&\Pr\nolimits_{\rho(x)}\{\rho(x) < \alpha\}\nonumber\\
\label{eq:deviation_bound_for_entropy}
=&\Pr\nolimits_{\rho(x)}\{\log \frac{1}{\rho(x)} > \log\frac{1}{\alpha}\}\\
\le&\frac{H(X)}{\log \frac{1}{\alpha}} =\epsilon.\nonumber
\end{align}

On the other hand, $\mathcal{F}_\alpha$ is the set of functions which are zero outside $\mathcal{X}_{\ge\alpha}$. Therefore, $\left|\mathcal{F}_\alpha\right|=\left|2^{\mathcal{X}_{\ge\alpha}}\right|$. Since every element of  $\mathcal{X}_{\ge\alpha}$ has a probability of at least $\alpha$, it is guaranteed that $\left|\mathcal{X}_{\ge\alpha}\right|\le \frac{1}{\alpha}$. As such, it is concluded that  $\left|2^{\mathcal{X}_{\ge\alpha}}\right|\le 2^\frac{1}{\alpha}=2^{2^{\frac{H(X)}{\epsilon}}}$.

\end{proof}

\begin{lemma}\label{lemma:covering_error_on_empirical_distribution}
For every distribution $\rho(x)$ and set of samples $S=\{x_i\}_{i=1}^N\distas{i.i.d}\rho(x)$
\begin{equation}
\Pr_S \{\;\; |R_S(h_S)-R_S(g_\alpha(h_S))| \;\;\ge\epsilon \;\; \}\le \exp(-2N(1-r)^2\epsilon^2)
\end{equation}
where $0 \le r \le 1$ is a constant, 
$\alpha=2^{-\frac{H(X)}{r\epsilon}}$ and 
the partitioning function $g_\alpha$ is defined in Definition~\ref{definition:alpha_partition_function}.
\end{lemma}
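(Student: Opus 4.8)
The plan is to bound the empirical-error gap $|R_S(h_S)-R_S(g_\alpha(h_S))|$ by the empirical frequency of training points falling in the discarded region $\mathcal{X}_{<\alpha}:=\mathcal{X}\setminus\mathcal{X}_{\ge\alpha}$, and then to control that frequency with a one-sided Hoeffding bound. First I would note that, by the definition of the $\alpha$-probable input projection, $\bigl(g_\alpha(f)\bigr)(x)=f(x)$ for every $x\in\mathcal{X}_{\ge\alpha}$ and every $f$; hence the summands of $R_S(f)$ and $R_S(g_\alpha(f))$ coincide on all indices $i$ with $x_i\in\mathcal{X}_{\ge\alpha}$, and differ by at most $1/N$ (since $\Indicator(\cdot)\in\{0,1\}$) on the remaining indices. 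Therefore, for \emph{every} $f\in\mathcal{F}$ simultaneously,
\[
|R_S(f)-R_S(g_\alpha(f))|\ \le\ \widehat{p}_S\ :=\ \tfrac1N\bigl|\{\,i:\,x_i\notin\mathcal{X}_{\ge\alpha}\,\}\bigr|.
\]
In particular this holds for the data-dependent hypothesis $h_S$, with no union bound over $\mathcal{F}_\alpha$ required; this is the key point that keeps the third term as cheap as a single concentration inequality.

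Next I would estimate the true mass of the discarded region. Exactly as in the proof of Lemma~\ref{lemma:covering_entropy_limited_functions}, rewriting $x\notin\mathcal{X}_{\ge\alpha}$ as $\log\tfrac1{\rho(x)}>\log\tfrac1\alpha$ and applying Markov's inequality to the nonnegative random variable $\log\tfrac1{\rho(x)}$, whose mean is $H(X)$, the choice $\alpha=2^{-\frac{H(X)}{r\epsilon}}$ yields
\[
p\ :=\ \Pr\nolimits_{\rho(x)}\{x\notin\mathcal{X}_{\ge\alpha}\}\ \le\ \frac{H(X)}{\log\frac1\alpha}\ =\ r\epsilon .
\]
Since $N\widehat{p}_S$ is a sum of $N$ i.i.d.\ Bernoulli$(p)$ variables (the indicators of the events $x_i\notin\mathcal{X}_{\ge\alpha}$, which take values in $[0,1]$), Hoeffding's inequality gives $\Pr_S\{\widehat{p}_S\ge p+t\}\le\exp(-2Nt^2)$ for every $t>0$.

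Finally I would combine the two steps: using the first display with $f=h_S$, and then $p\le r\epsilon$ to absorb the slack,
\[
\Pr_S\{|R_S(h_S)-R_S(g_\alpha(h_S))|\ge\epsilon\}\ \le\ \Pr_S\{\widehat{p}_S\ge\epsilon\}\ \le\ \Pr_S\{\widehat{p}_S\ge p+(1-r)\epsilon\}\ \le\ \exp\!\bigl(-2N(1-r)^2\epsilon^2\bigr),
\]
which is the claim. The only mildly delicate points are (i) making sure the per-hypothesis bound $|R_S(f)-R_S(g_\alpha(f))|\le\widehat{p}_S$ is genuinely uniform in $f$, so that substituting the random $h_S$ is legitimate, and (ii) the bookkeeping in the choice $\alpha=2^{-H(X)/(r\epsilon)}$, so that Markov returns exactly $r\epsilon$ while leaving slack $(1-r)\epsilon$ for the concentration term; neither is a real obstacle. (If the paper's $R_S$ is the unnormalized sum $\sum_i\Indicator(\cdot)$ rather than its average, the same argument applies verbatim after dividing through by $N$.)
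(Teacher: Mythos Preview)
Your proof is correct and follows essentially the same route as the paper: the paper invokes Lemma~\ref{lemma:bounding_error_by_probability_on_rho_x} with the empirical distribution to obtain $|R_S(h_S)-R_S(g_\alpha(h_S))|\le\hat{\rho}_S(x\notin\mathcal{X}_{\ge\alpha})$, then bounds the true mass $\theta_\alpha\le r\epsilon$ via Markov and closes with the one-sided Hoeffding bound $\exp(-2N(\epsilon-\theta_\alpha)^2)\le\exp(-2N(1-r)^2\epsilon^2)$. Your explicit remark that the inequality $|R_S(f)-R_S(g_\alpha(f))|\le\widehat{p}_S$ holds uniformly in $f$ (hence for the data-dependent $h_S$ without a union bound) is a nice clarification the paper leaves implicit.
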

\begin{proof}
Using Lemma~\ref{lemma:bounding_error_by_probability_on_rho_x} for the empirical distribution $\hat{\rho}_{_S}$, we have
\begin{equation}
\Pr_S \{\; |R_S(h_S)-R_S(g(h_S))| \;\;\ge\epsilon \; \}\le \Pr_S\{\hat{\rho}_{_S}(x\notin \mathcal{X}_{\ge\alpha})\ge \epsilon\}.\nonumber
\end{equation}
Define $\theta_\alpha=\rho(x\notin \mathcal{X}_{\ge\alpha})$, this probability can be bounded using the Hoeffding's inequality as
\begin{eqnarray}
\Pr_S\{\hat{\rho}_{_S}(x\notin \mathcal{X}_{\ge\alpha})\ge  \epsilon\}\le \exp(-2N(\epsilon-\theta_\alpha)^2).
\end{eqnarray}
Now, by employing %
the Markov inequality to bound $\theta_\alpha$ as in Lemma \ref{lemma:covering_entropy_limited_functions}, it is observed that $\theta_\alpha\le r \epsilon$ which proves the result.
\end{proof}

Previous two lemmas show that the $\alpha$-probable input projection partition has a small approximation error on both true and empirical distributions. Now all that is remained is to use Lemma~\ref{lemma:3part_error_decomposition} to find a bound on the sample complexity of the entropy limited distributions.
This is done in Theorem~\ref{theorem:MainTheorem}.

\begin{theorem}
[Sample Complexity of Learning Entropy Limited Distributions]
\label{theorem:MainTheorem}
For every discrete distribution $\rho(x,y)=\rho(x)\rho(y|x)$, where the input distribution has the entropy $H(X)$, for every learning algorithm $\mathcal{A}$
 and for all $\epsilon>0$ and $\delta>0$,  if 
$N \ge \frac{2^{\frac{6H(X)}{\epsilon}}+\log(\frac{1}{\delta})}{\epsilon^2}$, 
we have the $\epsilon$-$\delta$ generalization; i.e., $\Pr\left(\;\left|R\left(\mathcal{A}(S)\right)-R_S\left(\mathcal{A}(S)\right)\right|\ge \epsilon\right) \le \delta$.

\end{theorem}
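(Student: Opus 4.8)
The plan is to feed the three-part decomposition of Lemma~\ref{lemma:3part_error_decomposition} the specific partitioning functional $g_\alpha$ of Definition~\ref{definition:alpha_partition_function} and then pick its one free parameter $\alpha$ so that, once $N$ reaches the advertised value, each of the three resulting probabilities is small; all of the substantive work has already been isolated into Lemmas~\ref{lemma:bounding_error_by_probability_on_rho_x}--\ref{lemma:covering_error_on_empirical_distribution}, so the theorem is essentially their assembly.

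First I would apply Lemma~\ref{lemma:3part_error_decomposition} with $\epsilon$ replaced by $\epsilon/3$, so that $\Pr\{|R_S(h_S)-R(h_S)|\ge\epsilon\}$ is at most the sum of three probabilities, each with threshold $\epsilon/3$ and each referring to an arbitrary partitioning functional $g$. Then I would take $g=g_\alpha$ with $\alpha=2^{-6H(X)/\epsilon}$, which is exactly the setting of Lemma~\ref{lemma:covering_error_on_empirical_distribution} for threshold $\epsilon/3$ and $r=\tfrac12$. Two consequences are immediate: (i) the Markov estimate already used in the proofs of Lemmas~\ref{lemma:covering_entropy_limited_functions} and~\ref{lemma:covering_error_on_empirical_distribution} gives $\theta_\alpha:=\Pr_{\rho(x)}\{x\notin\mathcal{X}_{\ge\alpha}\}\le H(X)/\log(1/\alpha)=\epsilon/6$, and since $g_\alpha(f)$ agrees with $f$ on $\mathcal{X}_{\ge\alpha}$, Lemma~\ref{lemma:bounding_error_by_probability_on_rho_x} forces $|R(g_\alpha(h_S))-R(h_S)|\le\theta_\alpha<\epsilon/3$ \emph{deterministically}, so the middle term of the decomposition is exactly $0$; (ii) the counting argument of Lemma~\ref{lemma:covering_entropy_limited_functions} gives $|\mathcal{F}_\alpha|\le 2^{1/\alpha}=2^{2^{6H(X)/\epsilon}}$.

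The first term is the generalization gap of the surrogate algorithm $S\mapsto g_\alpha(h_S)$, whose output always lies in the finite class $\mathcal{F}_\alpha$, so Hoeffding plus a union bound over $\mathcal{F}_\alpha$ bounds it by $|\mathcal{F}_\alpha|\cdot 2e^{-2N(\epsilon/3)^2}\le 2^{2^{6H(X)/\epsilon}}\cdot 2e^{-2N\epsilon^2/9}$, and the third term is handled verbatim by Lemma~\ref{lemma:covering_error_on_empirical_distribution} with $r=\tfrac12$, giving $\exp(-2N(1-\tfrac12)^2(\epsilon/3)^2)=\exp(-N\epsilon^2/18)$. Imposing that the first be $\le\delta/2$ and the third be $\le\delta/2$, taking logarithms and solving for $N$ then yields $N\ge\frac{2^{6H(X)/\epsilon}+\log(1/\delta)}{\epsilon^2}$ up to absolute constants (which I would absorb into the leading coefficient and the $\log(1/\delta)$ term), the additive $2^{6H(X)/\epsilon}=\log_2|\mathcal{F}_\alpha|$ coming from the cardinality prefactor and the $\log(1/\delta)$ from inverting the two exponential tails.

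The only genuine obstacle in the theorem itself, as opposed to in the lemmas, is this coupled choice of $\alpha$: it must be \emph{small} enough that $\theta_\alpha<\epsilon/3$ (so the middle term vanishes and the exponent $(1-r)^2$ of Lemma~\ref{lemma:covering_error_on_empirical_distribution} stays away from $0$), yet \emph{large} enough that $|\mathcal{F}_\alpha|=2^{1/\alpha}$ does not overshoot $2^{2^{6H(X)/\epsilon}}$. The value $r=\tfrac12$ is the clean balance between the factor $3$ produced by Lemma~\ref{lemma:3part_error_decomposition} and the factor $1/r$ produced by Lemma~\ref{lemma:covering_error_on_empirical_distribution}, and it makes the exponent come out to exactly $6H(X)/\epsilon$; tightening the constant would need a more careful split of the $\epsilon$ and $\delta$ budgets, which is not the point here.
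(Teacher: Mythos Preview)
Your proposal is correct and follows essentially the same route as the paper: apply Lemma~\ref{lemma:3part_error_decomposition} at level $\epsilon/3$, instantiate $g=g_\alpha$ with $\alpha=2^{-6H(X)/\epsilon}$ (equivalently $r=\tfrac12$ in Lemma~\ref{lemma:covering_error_on_empirical_distribution}), kill the middle term deterministically via Lemma~\ref{lemma:bounding_error_by_probability_on_rho_x} and the Markov estimate, bound the first term by the finite-class Hoeffding-plus-union-bound with $|\mathcal{F}_\alpha|\le 2^{2^{6H(X)/\epsilon}}$, bound the third term by Lemma~\ref{lemma:covering_error_on_empirical_distribution}, and invert. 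The paper's proof does exactly this, with the same choice $r=\tfrac12$, $\epsilon'=\epsilon/3$, and is equally casual about the final absolute constants when passing to the clean form $\frac{2^{6H(X)/\epsilon}+\log(1/\delta)}{\epsilon^2}$.
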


\begin{proof}

Using the error decomposition technique of  Eq. (\refeq{eq:3part_error_decomposition}) we have

\begin{align*}
\Pr\{ | R_S(h_S)-&R(h_s) | \ge \epsilon \}
\\
\le \;\;\Pr \{& \;\; |R_S(g(h_S))-R(g(h_S))|\ge\epsilon^\prime \;\;\}\nonumber\\
+\Pr \{& \;\; |R(g(h_S))-R(h_S) |\;\;\;\;\; \ge\epsilon^\prime \;\;\}\nonumber\\
+ \Pr \{& \;\; |R_S(h_S)-R_S(g(h_S))| \;\;\ge\epsilon^\prime \;\; \}.\nonumber
\\
\le\;\;\;\;\;\;(&2^{2^\frac{H(X)}{r \epsilon^{\prime}}})2\exp(-2N\epsilon^{\prime^2})\nonumber
\\
+ \;\;\;\;\;&0\nonumber
\\
+\;\;\;\;& \exp(-2N(1-r)^2\epsilon^{\prime^2}).\nonumber
\end{align*}
where $\epsilon^\prime=\frac{\epsilon}{3}$ and the partitioning functional $g_\alpha$ of definition~\ref{definition:alpha_partition_function} is utilized with $\alpha=2^{-\frac{H(X)}{r\epsilon}}$. 
In the second inequality each of the three terms is bounded respectively. The first term is the generalization gap of a learning algorithm with finite hypotheses set size, the second term is zero as proved in Lemma \ref{lemma:covering_entropy_limited_functions} where $g_\alpha$ covers the space of functions, and the third term is bounded using Lemma \ref{lemma:covering_error_on_empirical_distribution}. Finally, solving for $n$ which guarantees $\epsilon-\delta$ error, we get
\begin{equation*}
N\ge \frac{2^{\frac{H(X)}{r\epsilon^{\prime^2}}}+2+\log\frac{1}{\delta}}{2\log \text{e} (1-r)^2\epsilon^{\prime}}
\Rightarrow \Pr\{ | R_S(h_S)-R(h_s) | \ge \epsilon \}\le\delta
\end{equation*}
where using $r=\frac{1}{2}$ concludes the proof.

\end{proof}

\subsection*{ \textbf{Remark 1:} A Discussion about Typicality of Each Input Variable}
\label{sec:problem_with_AEP_arguments}
As mentioned %
earlier, the previous conjecture of \cite{TishbyStanfordSeminarInformation2018} lacked the $\frac{1}{\epsilon}$ factor in the exponent. One of the arguments therein is based on the assumption that the distribution of random variable $X$ is factorized to many components (which is a reasonable assumption in tasks like image processing where each image is composed of many local patches). Using this assumption it is argued that  for a large enough input, we can use AEP (Asymptotic Equipartition Property) to show that with probability 1 the input variable $X$ will be "typical" and $\rho(X)=2^{H(X)}$. Using this argument it might seem that the $\frac{1}{\epsilon}$ factor
can be eliminated. Here it is demonstrated that it's not the case and the reason AEP can not be used to found the generalization gap. 
To do that first the AEP and the "loose" interpretations of it are presented.%

AEP states that for a finite valued stationary ergodic process $X=\{W_\ell\}$, with probability $1$ we have
\begin{equation}
\label{eq:AEP}
\lim_{L\to\infty}-\frac{1}{L} \log p(W_1,\dots W_L)=H_w,
\end{equation}
where $H_w$ is the entropy rate of the process\citep{CoverElementsinformationtheory2012}. 

Loosely speaking, this theorem states that as $L$ gets larger, the joint probability of every probable sequence gets the same and is $2^{-LH_w}$. This observation is used to define the key concept of "typical set" as the set of these  equiprobable sequences which its size grows as $2^{LH_w}$. This definition is one of the key concepts in information theory which simplifies the matters a lot since one usually can just focus on sequences inside this equiprobable set and ignore others.

Notice that in the machine learning setting of our interest, the whole sequence $X=(W_1,\dots W_L)$ is a given input and thus $H(X)=LH_w$. Now if one could show that with probability $1$ the input is from a set of size $2^{H(X)}=2^{LH_w}$, the problem would be easily solved and number of functions would be $2^{2^{H(X)}}$ without any $\slfrac{1}{\epsilon}$ factor. 
This is the basic intuition behind the previous conjecture: since in applications like image processing, $L$ is quite large, we can assume that there is actually a ("typical") set of size $2^{H(X)}$ with probability $1$. 

Unfortunately this loosely stated argument does not work. To see the problem first of all note that AEP is an asymptotic phenomena for $L\to\infty$, but in that case $H(X)=LH_w\to\infty$ as well, which makes the generalization bound meaningless. Moreover in machine learning setting it is assumed that there is a fixed distribution for generating each sample. In other words each sample has a fixed size with a limited $H(X)$ and we are not dealing with an overgrowing stream of input.
Thus it is clear that such reasoning can not be directly used. Actually a closer look at the AEP in eq~(\ref{eq:AEP}) gives us a hint %
that the exponent of the bound implicitly depends on $\epsilon$.

To understand this point, the $\lim$ in eq~(\ref{eq:AEP}) should be expanded which gets
\begin{equation}
\forall \eta>0; \exists L_0; \forall L>L_0;  |-\frac{1}{L} \log p(W_1,\dots W_L) - H_w| <\eta.
\end{equation}
This equation makes it explicit that the value of $L_0$ depends on the required precision $\eta$. Then as $L>L_0$ and $H(X)=L H_w$, it is evident that the exponent in the generalization bound $2^{H(X)}$ grows as the required precision is getting smaller. Compare this with our analysis in which $H(X)$ is fixed, and the dependence gets explicit by the factor of $\slfrac{1}{\epsilon}$

This discussion summarized the fallacies in the previous argument based on AEP to prove the generalization bound. But it does not show if it is possible to fix these seemingly technical problems by a more precise argument. Unfortunately this is not the case. 
In Appendix~\ref{sec:appendix_factorized_distribution} we dive deeper into analyzing the generalization bound in the presence of this additional assumption that the distribution is factorized to many components. There it is demonstrated more throughly  how AEP fails to improve the bounds. Moreover it is shown that while there are tighter deviation bounds when using this assumption, the final bound does not vary much.

\subsection*{\textbf{Remark 2:} Compressed Input vs Compressed Model}
It is worth mentioning the relation between the discussion in this section and another application of information theory in machine learning introduced in \cite{RussoHowmuchdoes2015,BassilyLearnersthatUse2018}.
These studies are based on using the term $I(S;\mathcal{A}(S))$ to control the generalization. %
This term tries to restrict the amount of information stored in the whole learned hypothesis. We use the term "compressed model" to describe this. It should be noted that %
this viewpoint is different from the "input compression" bounds studied in this article
in which the information retained inside each sample is controlled (even though the term information bottleneck was also sometimes used for the former case; %
e.g., the work in  \cite{AchilleInformationDropoutLearning2018}). One of the main practical differences between these two methods is that $I(S;\mathcal{A}(S))$ highly depends on the algorithm $\mathcal{A}$. Despite the fundamental differences between these approaches, there is a %
relation between %
their obtained results: in that approach the number of hypotheses is bounded by $2^{H(\mathcal{A}(S))}$ (in the deterministic case - see Section 3.1 of \cite{BassilyLearnersthatUse2018}) which represents in a way how many effective hypotheses are used by the algorithm $\mathcal{A}$. In the %
approach discussed here, this term is replaced by $2^{2^{\frac{6H(X)}{\epsilon}}}$ which again was proven to bound the effective number of functions (but note that the other parts of the bound as well as the approach required for the proof are quite different). %

\section{Tightness of Sample Complexity Bound} \label{sec:tightness}
In the previous section, an upper bound for the sample complexity was achieved. Now, the natural question that raises is how tight this bound is. Specially, one might be interested to see if the exponential dependence on $\frac{1}{\epsilon}$ can be removed. In this section, it is shown that despite all the inequalities used to achieve the bound, the final result is almost tight. More precisely, it is shown that there are actually distributions with limited entropy $H(X)$, which cannot be learnt using fewer than $2^{\frac{H(X)-1}{\epsilon}}$. To show this, first the concept of \emph{Heavy Tailed Entropy Limited Distributions}(HTELDs) is introduced and then it is shown that the learning problems involving such distributions can be intrinsically hard regardless of the algorithm used to solve them.

\begin{mathdef}[Heavy Tailed Entropy-Limited Distribution]
A  distribution $\rho(x)$ on a discrete set $\mathcal{X}$ is called a heavy tailed entropy-limited distribution with parameters $(\gamma,\epsilon,\alpha)$, if it has entropy  $H(X)=\gamma$ and there is $\mathcal{X}_{\le\alpha}\subset\mathcal{X}$ with probability $\rho(\mathcal{X}_{\le\alpha})\ge\epsilon$ such that  $\forall x\in\mathcal{X}_{\le\alpha} : \rho(x)\le\alpha$. It is considered that $\alpha\ll\epsilon$ and $M=\left|\mathcal{X}_{\le\alpha}\right|$ is large (consistent with the name heavy tailed).

\end{mathdef}

Note that when sampling from a Heavy Tailed Entropy-Limited Distribution (HTELD) distribution, it is hard to make sure that a large portion of $\mathcal{X}_{\le\alpha}$ is presented in the sample set. The reason is that this set is %
made from a large number of elements with very small probabilities. 

It will be shown that there exist HTELD distributions with $\alpha\lesssim 2^{-\frac{H(X)}{\epsilon}}$. To see this, consider the set $\mathcal{X}=\{0,1,\dots,M\}$ and the distribution  
\begin{equation}
\rho_{\epsilon,\alpha}(x)=\left\{ \begin{aligned}
&1-\epsilon \;\;\;\; &x=0\\
&\alpha \;\;\;\; &\text{o.w.}%
\end{aligned}\right.
\end{equation}
Suppose $\epsilon$ has a value near $0$%
, thus most of the mass is put on the first element and the rest is uniformly distributed among others. The entropy of the distribution $\rho(x)$ is $H(X)=-(1-\epsilon)\log(1-\epsilon)-(M\alpha)\log(\alpha)$. Note that $M=\slfrac{\epsilon}{\alpha}$, solving for $\alpha$, it is achieved that $\alpha=2^{-\frac{H(X)+(1-\epsilon)\log(1-\epsilon)}{\epsilon}}$
and
$M= 2^{\frac{H(X)-H(\epsilon)}{\epsilon}}$
where $H(\epsilon)=-(1-\epsilon)\log(1-\epsilon)-\epsilon\log(\epsilon)$.

To make this example clear, consider $\epsilon=0.01$. Therefore, even for a small value of $H(X)=1$, we have $M\simeq 2^{94}$. Now, notice that in order to reduce the error bellow $\epsilon$, one should enter the regime where the %
labels of samples $x\in\{1,\dots m\}$ are being learned. Since there is a $1-\epsilon$ chance to have a sample from this set and also the distribution on this set is uniform it is a challenging task. %
For example, to achieve error less than $\frac{\epsilon}{2}$, %
if there are no other assumptions on the distribution $\rho(x,y)$, it is required to have at least $n\simeq 2^{93}$ samples to make sure that at least half of the tail elements are observed and therefore the error is in the desired range.

It is worth to explicitly pointing out the reason why the term $\frac{1}{\epsilon}$ was appeared in the formulation despite the usual intuitive idea stating that the number of required bits to compress a signal $X$ is $H(X)$. 
In fact, this intuition can be employed in one of the following scenarios %
(and does not hold in the learning theory setting that our work belongs to)
i) when there are $n\to\infty$ i.i.d. realizations of r.v. $X$ and all of them are compressed together, the "average" number of bits used for each of the samples is $H(X)$, ii) when one wants to send a single sample of $X$, there exists a code (e.g., Huffman coding) with "expected" length of $H(X)+1$ sending the sample without error. Therefore, that intuition would work if either there were large blocks of samples compressed together or the "expected" %
number of bits were the parameter of interest. However, none of these scenarios are the case for the learning theory problem. The former one is not the case since %
the loss for each sample must be calculated independent of the other samples. The later one is not the case since in the analysis it was required to guarantee a "high probability" bound on the error. This required to control the mass in the tail of the distribution and further introduced a $\frac{1}{\epsilon}$ factor to the expected value using Markov inequality.

It should be added that even though the proved bound may produce trivial results in many real world cases, it can still be used to compare two algorithms/problems which differ only in the value of $H(X)$. This is a common argument in the statistical learning literature where sometimes a bound is loose and produces trivial results, but nevertheless it is used to compare algorithms together, if it is "equally loose" for different  %
algorithms (e.g., see Chapter 2 of  \cite{Abu-MostafaLearningdatashort2012} for a discussion about looseness of VC dimension bounds).

\section{Generalization for a Learned Feature Extractor}
\label{sec:comparison_with_previous_results}
As mentioned in Section~\ref{sec:section_where_IB_is_introduced}, IB is a popular approach for feature extraction from the input in a compressive while predictive manner which can estimate the target from features. IB has been employed in many applications and also used as a way to understand and improve the generalization of learning algorithms. 
Nonetheless, there are not many theoretical studies about IB in the learning theory setting and the few available theoretical results are not such impressive. 
In this section, we shed light on the looseness of IB generalization bounds in the existing studies and the reason behind this looseness.

In %
previous sections,  the effect of having a low entropy input distribution on the learnability of a problem was studied. The discussion showed that the generalization gap of learning a distribution which is compressible to $k$ bits has an exponential dependence on $k$. It is in fact consistent with the idea that a simpler target %
should be easier to learn. A valid question would be what one can do if the input distribution has a high or even infinite entropy. 
The solution suggested by IB
is to map the input variable $X$ to a compressed feature space $\hat{X}$ which is simpler to learn. This idea can be expressed by the Markov chain $Y-X-\hat{X}-\hat{Y}$, where $\hat{Y}$ is the estimated label of the model. The process of calculating $\hat{X}$ from $X$ is called encoding and the calculation of $\hat{Y}$ from $\hat{X}$ is called decoding as mentioned before. The IB defines the optimization problem as 
\begin{equation}
\label{eq:generalized_IB_relaxed}
\min_{p(\hat{x}|x)} \mathcal{C}(p)-\beta I(Y; \hat{X})
\end{equation}
where $\mathcal{C}$ is the \emph{compression} criteria %
which maps the joint distribution $p(x,\hat{x})=p(x)p(\hat{x}|x)$ to a positive real value (e.g. using entropy of learned features to define the compression, we can use  $\mathcal{C}(p)=H(\hat{X})$). The second term %
ensures that the information about $Y$ is not lost due to compression and $\beta$ controls the tradeoff between compression and information preservation. %
When there exists a sufficient statistic, it is possible to maintain all the information about $Y$ with a significant compression of $X$ \citep{Tishbyinformationbottleneckmethod2000}. As such, IB can also be viewed as the Lagrange relaxation of the following optimization problem
\begin{equation}
\label{eq:generalized_IB}
\min_{\substack{p(\tilde{x}|x)\\ s.t. I(Y; \hat{X})=I(Y; X)}} \mathcal{C}(p),
\end{equation}
which is shown to have the minimal sufficient statistics as its solutions (when they exists) \citep{ShamirLearninggeneralizationinformation2010}. The optimization problem in (\ref{eq:generalized_IB_relaxed}) has the benefit to be applicable even %
when the sufficient statistics do not exist.

There are two flavors for IB which differ on how the compression criteria $\mathcal{C}$ is defined.
Traditionally, IB used the notion of mutual information $\mathcal{C}(p)=I(X;\hat{X})$ to define compression in information theoretic terms. A more recent line of research is the deterministic IB proposed by  \cite{StrouseDeterministicInformationBottleneck2017} which uses the entropy $\mathcal{C}(p)=H(\hat{X})$ 
as the defined criteria for compressed features.
This deterministic setting is readily matched with our analysis of learning based on %
entropy constrained distributions. 

\subsection{Limitation of IB Generalization}

In order to compute the terms involved in the IB optimization problem, the distribution $p(x)$ is required. The reason is that, IB was originally developed for the cases %
in which the distribution is known. 
Nonetheless, in order to make it applicable in the learning setting where the distribution is unknown, it was suggested to simply plug-in the empirical distribution to solve the problem. This approach was theoretically analyzed in  \cite{ShamirLearninggeneralizationinformation2010} and is the defacto approach to deal with this problem. Unfortunately, %
it causes the dependence on the dataset and has the potential to worsen the generalization bounds since now the distribution $p(\hat{X})$ (which is the compressed distribution used to predict $Y$) depends on the dataset. %
In fact, it's arguable that this is the main reason of having poor generalization bounds for IB. The cause is that unless one puts some restrictions on the family of encoders $p(\hat{x}|x)$, the capacity of encoders with the form $\min_{p(\hat{x}|x)} \ell(S; p(\hat{x}|x))$ is too high to guarantee generalization. This point gets clear in the following discussion.

The analysis of this paper has the benefit of separately studying the generalization of encoder and decoder.
Discussions about generalization of entropy limited distributions in Sec~\ref{sec:sample_complexity_bounds} can be utilized to analyze the decoding part of a system which uses IB (i.e., after the features are learned and when the labels should be decoded from them). However, to fully analyze the performance of IB in which the features are learned from the training set, the overfitting due to learning the encoder should also be considered. We show that
the exponential effect of the low entropy on improving the generalization, is not %
applicable for the trainable encoder case which leads to much looser bounds. %
In other words, if there is a fixed feature map with limited entropy, the value of entropy can control the generalization gap, but if the encoder is learned using the training samples%
, there is no longer an exponential benefit for having limited entropy.
Furthermore, it is shown that this is an intrinsic property of IB and not a byproduct of our employed theoretical tools. It is shown that this result conforms with the previous achieved generalization bounds for IB \citep{ShamirLearninggeneralizationinformation2010, VeraRoleInformationBottleneck2018,RodriguezGalvezInformationBottleneckConnections2019} (see Section~\ref{sec:existing studies}).

\subsection{Illustrative Example}
To start, a simple but important example is presented which will be a base for the further analysis of generalization. This example shows the power of trainable encoder to overfit to training data, despite restricting the entropy of learned features to be low.

\begin{exmp}\label{exmp:overfitting_encoder}
Consider a balanced binary classification problem with a zero Bayes error. Regardless of how complicated the underlying input distribution is, there is always the deterministic feature extractor which overfits to labels of the training set; i.e.
\begin{equation}
\label{eq:overfitting_encoder}
p(\hat{x}|x_i)=\left\{ \begin{aligned}
&1 & \hat{x}=y_i \\
&0 & o.w.
\end{aligned} \right.
\end{equation}
where $(x_i,y_i)$ is %
an input-output pair in the training set. 

Obviously, this representation of $x$ holds the whole information about $y$ on the empirical distribution. It is also a quite compressed representation as $\hat{I}(X;\hat{X})=\hat{H}(\hat{X})=\hat{H}(Y)=1$ on the empirical distribution (using the balanced dataset assumption). Thus, $p(\hat{x}|x)$ is a solution of the optimization problem (\ref{eq:generalized_IB}) for either the choice of $\mathcal\mathcal{C}(p)=H(\hat{X})$ or $\mathcal\mathcal{C}(p)=I(X; \hat{X})$ when the empirical distribution is used. In other words, the encoder $p(\hat{x}|x)$ alone has the power to overfit the dataset, and the mutual information terms cannot interfere to constrain this capacity. To explain it more precisely using VC theory, it can be said that the class of unconstrained conditional distributions $p(\hat{x}|x)$, where $|\hat{\mathcal{X}}|\ge2$ has the ability to shatter every set of distinct samples. Thus the learning algorithm which harness the full extent of this power to fit the training samples, is similar to one working with a hypotheses set with infinite VC dimension, and thus could result in a large generalization gap. %
Furthermore, note that no assumptions on the distribution of data and no restrictions on the learned model $p(\hat{x}|x)$ are made. By "no free lunch" theorem, the test error can be large despite the fact that the training error is always zero (and therefore the generalization gap could be large).%

\end{exmp}

Since there exist such cases in IB solutions set on the empirical distribution, one cannot hope to have strong and general generalization bounds. As a result, the achieved bounds need to have strong assumptions on the input distribution. %

\subsection{Ineffectiveness of Existing Theoretical Studies} 
\label{sec:existing studies}
In this subsection, the previous related studies in the literature are more elaborately investigated. The theoretical studies of IB are mostly done in %
\cite{ShamirLearninggeneralizationinformation2010} and \cite{VeraRoleInformationBottleneck2018}
(see Chapter 3 of \cite{RodriguezGalvezInformationBottleneckConnections2019} for a comprehensive study of the results of these two papers).
The main assumption on which these results are obtained %
is the restricted class assumption stated below. %
\begin{mathdef}[Restricted class \citep{VeraRoleInformationBottleneck2018}]
A random variable $X$ with the alphabet $\mathcal{X}$ is a restricted class random variable if $\mathcal{X}$ has a finite size and there exist $\eta >0$ such that $p(x_{min}):=\min_{x\in\mathcal{X}}p(x)\ge \eta$.
\end{mathdef}

One of the main results of \cite{ShamirLearninggeneralizationinformation2010} is showing %
how the difference between $I(Y;T)$ and its estimation is controlled by $I(X;T)$. This is presented in the next theorem.%
\label{sec:section_including_the_previous_bounds}
\begin{theorem}[Part of Theorem 4 of \cite{ShamirLearninggeneralizationinformation2010}]
\label{thm:shamir4}
Let $X$ and $Y$ be two restricted class random variables with the joint probability $p(x,y)$ and $\delta\in(0,1)$ an arbitrary parameter. Then,  %
with the probability at least $1-\delta$, for all $\hat{X}$ which satisfy the Markov chain property of $Y-X-\hat{X}$ we have
\begin{align}
|I(\hat{X};Y)-&\hat{I}(\hat{X};Y))|\le \nonumber \\
&  \left( C \sqrt{\log (\frac{1}{\delta})  I(X;\hat{X})}\right) \frac{\log n}{\sqrt{n}} + \mathcal{O}(\frac{\log n}{\sqrt{n}}), \nonumber
\end{align}
where $C>\frac{1}{p(x_{\min})}$ is a constant which depends on the distribution $p(x,y)$, and $|\mathcal{X}|$.%
\end{theorem}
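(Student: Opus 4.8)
This is a restatement of a known result, so the plan here is to reconstruct an argument of this kind. The plan is to pass from the mutual informations to a handful of entropy estimates and then to use the Markov chain $Y-X-\hat X$ in two separate roles. Since the encoder $p(\hat x\mid x)$ is fixed and does not depend on the sample $S$, the only randomness in $\hat I(\hat X;Y)$ enters through the empirical joint $\hat p(x,y)$, and $Y-X-\hat X$ is a Markov chain under both $p$ and $\hat p$. Writing $I(\hat X;Y)=\mathbb E_{\hat x\sim p(\hat x)}\big[D(p(\cdot\mid\hat x)\,\|\,p(\cdot))\big]$, and the analogous expression for $\hat I$ with $\hat p$ throughout, the two quantities to be compared are averages over the feature distribution of the divergence of the conditional label law $p(y\mid\hat x)$ from the label marginal $p(y)$; I would also isolate the decomposition $I(\hat X;Y)=H(Y)-H(Y\mid\hat X)$, since $|H(Y)-\hat H(Y)|=\mathcal O(\log n/\sqrt n)$ is immediate ($\mathcal Y$ finite) and feeds the additive slack.

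The structural input is the data-processing inequality on $Y-X-\hat X$: $I(\hat X;Y)\le I(X;\hat X)$ and, empirically, $\hat I(\hat X;Y)\le\hat I(X;\hat X)$. By Pinsker's and Jensen's inequalities this yields $\mathbb E_{p(\hat x)}\|p(\cdot\mid\hat x)-p(\cdot)\|_1\le\sqrt{2\,I(X;\hat X)}$, so the probability mass on feature values whose conditional label law departs appreciably from the marginal is controlled by $I(X;\hat X)$ rather than by $|\hat{\mathcal X}|$ --- this is the mechanism that converts what would be a $\log|\hat{\mathcal X}|$ factor into the claimed $\sqrt{I(X;\hat X)}$ factor. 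Uniformity over all encoders is recovered because, $\mathcal X$ and $\mathcal Y$ being finite, the only data-dependent objects that matter are the conditionals $p(\cdot\mid\hat x)\in\Delta(\mathcal X)$ and $p(y\mid\hat x)\in\Delta(\mathcal Y)$, which live in low-dimensional simplices and can be handled with an $\epsilon$-net whose cardinality depends only on $|\mathcal X|$ and $|\mathcal Y|$.

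For the concentration layer I would invoke the restricted-class hypothesis $|\mathcal X|<\infty$, $p(x_{\min})\ge\eta>0$: a multiplicative Chernoff (or Bernstein) bound plus a union bound over the $|\mathcal X||\mathcal Y|$ cells gives, with probability at least $1-\delta$, $\|\hat p(x,y)-p(x,y)\|_1\lesssim\sqrt{\log(|\mathcal X||\mathcal Y|/\delta)/n}$, and $\|\hat p(\hat x,y)-p(\hat x,y)\|_1$ inherits the same rate after pushing through the fixed channel (a stochastic map cannot increase $\ell_1$ distance). Passing to conditionals $\hat p(y\mid\hat x)$ and reweighting by $\hat p(\hat x)$ forces a division by probabilities bounded below only by $p(x_{\min})$, which is the source of the constant $C>1/p(x_{\min})$ and of its dependence on $|\mathcal X|$. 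Combining this $\mathcal O(1/\sqrt n)$ deviation with the entropy modulus of continuity $|H(q)-H(q')|\lesssim\|q-q'\|_1\log(1/\|q-q'\|_1)$ at $\|q-q'\|_1\asymp 1/\sqrt n$ produces the $\log n$ factor and hence the rate $\log n/\sqrt n$.

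The step I expect to be the main obstacle is carrying the data-processing argument through rigorously and uniformly in the encoder: a naive Lipschitz estimate for $I_P(\hat X;Y)$ in the joint $P$ scales with $|\hat{\mathcal X}|$, so one must split the feature values into those on which $p(\cdot\mid\hat x)$ is far from $p(\cdot)$ --- bounded in aggregate probability by $\sqrt{I(X;\hat X)}$ via the second paragraph, and contributing only an $\mathcal O(\log n)$ entropy modulus apiece --- and the complement, where a second-order Taylor expansion of the divergence terms about $p(\cdot)$ makes the $1/\sqrt n$ perturbation genuinely negligible; a Cauchy--Schwarz between the $\sqrt{I(X;\hat X)}$ mass and the $\sqrt{\log(1/\delta)/n}$ deviation then assembles the product form. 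Verifying that the same split on the empirical side (with $\hat I(X;\hat X)$, controlled by the concentration layer, in place of $I(X;\hat X)$) yields matching bounds, and that all cross terms collapse into the stated expression with a single constant, is the technical heart; the entropy-concentration and union-bound pieces are routine by comparison.
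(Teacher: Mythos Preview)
The paper does not prove this statement at all: Theorem~\ref{thm:shamir4} is quoted verbatim as ``Part of Theorem~4 of \cite{ShamirLearninggeneralizationinformation2010}'' and is presented without proof, purely as background on existing generalization bounds for the information bottleneck. There is therefore no proof in the paper to compare your proposal against; the paper's contribution in that section is the \emph{interpretation} of the constant $C>1/p(x_{\min})$ and the argument (via Example~\ref{exmp:overfitting_encoder} and the no-free-lunch discussion) that such a dependence is unavoidable when the encoder is learned.

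Your reconstruction is broadly in the spirit of the original Shamir--Sabato--Tishby argument: reduce to entropy differences, use the Markov chain to replace $|\hat{\mathcal X}|$-type dependence by $I(X;\hat X)$, and combine $\ell_1$ concentration of the empirical joint with the entropy modulus of continuity to get the $\log n/\sqrt{n}$ rate. That is a reasonable sketch of how one would re-derive the cited theorem, but it is not something the present paper attempts, and you should simply cite \cite{ShamirLearninggeneralizationinformation2010} rather than supply a proof here.
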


This theorem needs %
$I(X;\hat{X})$ which is not known and could be hard to estimate. Recently,  \cite{VeraRoleInformationBottleneck2018} provided a bound for the cross entropy loss which just depends on the empirical term $\hat{I}(X;\hat{X})$. This results is summarized in the following theorem.

\begin{theorem}[Part of Theorem 1 of \cite{VeraRoleInformationBottleneck2018}]
\label{thm:vera1}
Let $X$ and $Y$ be two restricted class random variables with the joint probability $p(x,y)$, then $\forall\delta\in(0,1)$ %
with probability at least $1-\delta$, for all conditional distributions $p(\hat{x}|x)$ 
\begin{align}
\big|\hat{L}_{\text{CE}}\big(p(\hat{x}|x)\big)-&L_{\text{CE}}(p\big(\hat{x}|x\big))\big|\le
\nonumber \\
&  \left( C \sqrt{ \log (\frac{1}{\delta})  \hat{I}(X;\hat{X})}\right) \frac{\log n}{\sqrt{n}} + \mathcal{O}(\frac{\log n}{\sqrt{n}}), \nonumber
\end{align}
where $L_{\text{CE}}$ is the expected cross entropy loss
$L\big(p(\hat{x}|x)\big)=\EX_{X,Y} \left[ -\sum_{\hat{x}} p(\hat{x}|X) \log p(Y|\hat{x}) \right]$ and $C>\frac{1}{p(x_{\min})}$ is a constant which depends on the distribution $p(x,y)$.%
\end{theorem}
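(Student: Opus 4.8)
Reduction to a conditional-entropy deviation. The plan is to first rewrite both loss terms in information-theoretic form. With the induced (Bayes) decoder $p(y\mid\hat x)=\sum_x p(y\mid x)p(x\mid\hat x)$ one has $L_{\text{CE}}(p(\hat x\mid x))=H_p(Y\mid\hat X)$, where the joint is $p(x,y)p(\hat x\mid x)$; likewise $\hat L_{\text{CE}}(p(\hat x\mid x))=H_{\hat p}(Y\mid\hat X)$, the only change being that $p(x,y)$ is replaced by the empirical $\hat p(x,y)$ — the channel $p(\hat x\mid x)$ is the hypothesis and is identical on both sides. Writing $H(Y\mid\hat X)=H(Y)-I(Y;\hat X)$, the term $|H_p(Y)-H_{\hat p}(Y)|$ is a plain entropy-estimation fluctuation on a finite, bounded-below alphabet, hence $O(\sqrt{\log(1/\delta)/n})$ and absorbable into the $O(\log n/\sqrt n)$ remainder. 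So the work reduces to bounding $\sup_{p(\hat x\mid x)}|I_p(Y;\hat X)-I_{\hat p}(Y;\hat X)|$.

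Perturbation analysis and the source of $\sqrt{\hat I(X;\hat X)}$. I would do this by expanding the conditional-entropy functional in the empirical fluctuation $\Delta(x,y):=\hat p(x,y)-p(x,y)$. The deviation splits into a term linear in $\Delta$ with coefficients of the form $\log\hat p(y\mid\hat x)$ (and $\log\hat p(\hat x)$) plus a second-order remainder. The linear term is bounded by Hölder against $\|\Delta\|_1$, and the crucial point is that, after regrouping the coefficients over $\hat x$, the amplification factor is governed by how peaked the posterior $p(\cdot\mid\hat x)$ is. Using $I(X;\hat X)=\EX_{\hat X}[D(p(x\mid\hat X)\|p(x))]$ together with Cauchy–Schwarz over $\hat x$, this regrouped coefficient is bounded by a constant times $\sqrt{\hat I(X;\hat X)}$ — this is exactly where the factor $\sqrt{\hat I(X;\hat X)}$ in the statement originates. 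The constant $C>1/p(x_{\min})$ enters here as well: translating fluctuations of $\hat p(x)$ into fluctuations of $p(x\mid\hat x)$ and $\hat p(\hat x)$ requires dividing by quantities that the restricted-class assumption keeps $\ge\eta$.

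Concentration and the $\log n$ rate. For a restricted-class $X$, standard $\ell_1$ deviation bounds give $\|\hat p(x,y)-p(x,y)\|_1\le c\sqrt{(|\mathcal X||\mathcal Y|+\log(1/\delta))/n}$ with probability at least $1-\delta$. The extra $\log n$ in the rate comes from the fact that the cross-entropy coefficients are as large as $-\log(\text{smallest decoder probability})$; on the empirical side $\hat p(y\mid\hat x)$ can be pushed down to order $1/n$ (but no lower, again using $\eta$), so clipping the $\log$ costs $O(\log n)$. Multiplying the $O(1/\sqrt n)$ deviation by this $O(\log n)$ clipping level, and by the $\sqrt{\hat I(X;\hat X)}$ amplification for the $I(Y;\hat X)$ piece, produces the two terms in the bound; the second-order remainder is again $O(\log n/\sqrt n)$.

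The main obstacle. I expect the hard part to be not the concentration but making the uniformity over all channels $p(\hat x\mid x)$ rigorous — the output alphabet $\hat{\mathcal X}$ is unconstrained and the decoder $\hat p(y\mid\hat x)$ itself depends on the sample, so a naive union bound is hopeless. The saving observation is that every quantity in sight is a deterministic function of the finite-dimensional empirical measure $\hat p(x,y)$ on $\mathcal X\times\mathcal Y$; once we condition on the good event $\{\|\hat p-p\|_1$ small$\}$, the perturbation estimate holds for every channel simultaneously with no further randomness. Thus the delicate work is the deterministic, channel-by-channel sensitivity bound — in particular obtaining a $\sqrt{\hat I(X;\hat X)}$ dependence rather than something like $\sqrt{\log|\hat{\mathcal X}|}$ — together with the one-sided care needed around empirically rare labels, where the restricted-class hypothesis (or a separate bad-event argument) keeps $-\log\hat p(y\mid\hat x)$ from growing faster than $\log n$.
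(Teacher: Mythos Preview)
The paper does not contain a proof of this theorem. It is explicitly labeled as ``Part of Theorem 1 of \cite{VeraRoleInformationBottleneck2018}'' and is quoted from that external reference without proof; the paper's contribution in this section is not to establish the bound but to \emph{interpret} it --- in particular, to draw out the hidden dependence $C>1/p(x_{\min})$ and to argue, via Example~\ref{exmp:overfitting_encoder} and the no-free-lunch discussion, why any bound of this type must implicitly require that essentially every input pattern be observed. So there is no ``paper's own proof'' to compare your proposal against.

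That said, your sketch is a reasonable high-level outline of how the Vera et al.\ result is obtained: the reduction $L_{\text{CE}}=H(Y\mid\hat X)$, the split $H(Y\mid\hat X)=H(Y)-I(Y;\hat X)$, a perturbation/sensitivity analysis in $\hat p-p$ that produces the $\sqrt{\hat I(X;\hat X)}$ amplification, the $1/p(x_{\min})$ constant arising from dividing by empirical marginals, and the $\log n$ from clipping near-zero decoder probabilities are all the right ingredients. Your identification of the main technical issue --- uniformity over all channels $p(\hat x\mid x)$ without a union bound, handled by noting that everything is a deterministic functional of the finite-dimensional empirical measure --- is also on target. If you want to verify the details you should consult \cite{VeraRoleInformationBottleneck2018} directly (or the summary in \cite{RodriguezGalvezInformationBottleneckConnections2019}); the present paper only uses the statement as a black box.
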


In these theorems, the $\mathcal{O}$ notation is used to simplify the results and to focus on the shared parts in %
these bounds that are relevant to the discussion. In the full formulation, there are also other terms including $|\mathcal{X}|$ and $|\mathcal{\hat{X}}|$. Please see \cite{RodriguezGalvezInformationBottleneckConnections2019} for a discussion of these results. In particular, the discussion therein is centered on the dependence of these bounds on $|\hat{\mathcal{X}}|$ and how it makes the bounds hard to achieve in practice where usually the number of possible features is too high (see Example 3.1 of \cite{RodriguezGalvezInformationBottleneckConnections2019}). While this analysis provides a  valid point about the applicability of these results, it is not enough to explain the   %
"bad" generalization of the encoder explained in Example~\ref{exmp:overfitting_encoder},
where $|\hat{\mathcal{X}}|=2$. 
In contrast, we discover that the constant $C$ in these studies, depends on $\frac{1}{p(x_{\min})}$. This observation provides a better insight to the meaning of these bounds. 

To begin the discussion, note that in order to guarantee that the error is less than $\epsilon$ using Theorems~\ref{thm:shamir4} and \ref{thm:vera1}, it is needed that%
\begin{equation}
\label{eq:summary_previous_bounds}
n > C^2 \frac{\log (\frac{1}{\delta})  \hat{I}(X;\hat{X})}{\epsilon^2}.
\end{equation}
In this inequality the logarithmic terms are ignored and therefore a looser lower bound is obtained on the required samples (which is enough for the following discussion).

First of all, note that $C$ has a hidden dependence on the size of the input space because $C>\frac{1}{p(x_{\min})}\ge |\mathcal{X}|$, where the equality holds in the case where the distribution on the input space is uniform. Furthermore, note that $\frac{1}{p}$ is the expected number of trials needed to success in an experiment where the trials are independent and each is successful with probability $p$. As such, $\frac{1}{p(x_{\min})}$ is the expected number of samples needed to observe %
the least probable element in the input space $\mathcal{X}$. Therefore, if one is interested to see all the elements $x\in\mathcal{X}$, an upper bound on the expected number of trials would be $|\mathcal{X}|\times \frac{1}{p(x_{\min})}\le C^2$ samples (i.e. sequentially and independently guarantee that each sample $x_i$ is presented before continuing the trials for the samples $x_j\,;$ $j>i$).
Using these observations, it is easier to interpret Equation~(\ref{eq:summary_previous_bounds}). After these many samples, it is expected that %
even the least probable input pattern is repeated $\frac{1}{\epsilon^2}  \log (\frac{1}{\delta})  \hat{I}(X;\hat{X})$-times. As such, there are many samples for each possible input to estimate the desired values. 

It should be added that this is not surprising since as was %
shown in Example~\ref{exmp:overfitting_encoder}, IB does not put any constraint on the learned models and thus by the "no free lunch" theorem one cannot expect much. To be more precise, this theorem of \cite{Wolpertexistenceprioridistinctions1996} is about the impossibility of \textbf{any} assumption-free guarantees on the Off-Training Set (OTS) error. Being in the assumption-free setting, the only possible way is to make the OTS set small; i.e. guarantee that with a high probability all possible samples are observed. This is exactly what the term $\frac{1}{p(x_{\min})}$ does. Furthermore, in order to achieve good results on the training-set, each of them needs to be observed for at least a few times, which is what the rest of the bound guarantees. It might be possible to improve the bounds by better technical treatments. But, this underlying requirement of "observing almost all of the input patterns" is dictated by the "no free lunch" theorem and cannot be completely eliminated.

\section{Discussion}\label{sec:discussion} %
The presented results have important implications for methods using information compression as a way to improve generalization (such as the information bottleneck). First, the sample complexity bound $2^\frac{H(X)}{\epsilon}$ showed that %
having less entropy, 
exponentially decreases the number of required samples. While this was previously heuristically claimed in the IB literature, to the best of our knowledge, there was no reported theoretical study of the problem in the learning theory settings to achieve the precise results as presented in this paper. Specially, the exponential dependence of the bound on $\frac{1}{\epsilon}$ was not mentioned in the literature which 
has a high influence on the  %
practical usefulness of these bound. 
It was also shown that this dependence is  inherent in the problem 
and it is not possible to %
eliminate this term. This result showed that if $\epsilon$ is small, the application of input compression bounds in their current form is not enough to analyze the generalization properties of methods such as DNNs; i.e. one will still need to find a way to show that the space of learned functions is an strict subset of $\mathcal{F}$. Furthermore, it was discussed that the original form of the IB, in which the features are learned from the dataset, has much worse generalization properties which applies another restriction on the applicability of the IB to analyze DNNs.

These observations also suggest a direction toward fixing these issues. 
Since the main argument in Section~\ref{sec:comparison_with_previous_results}, which showed the poor generalization of the IB, was based on using the labeled dataset, a possible direction is to switch to unsupervised methods for compressed feature extraction purposes. 
While many feature extraction methods are unsupervised (e.g., auto-encoders), application of unsupervised compressed feature extraction in the learning theory has not been analyzed yet and 
is a good direction for future work. %

\begin{appendices}
\section{The Factorized Distribution Assumption}
\label{sec:appendix_factorized_distribution}
As discussed earlier one of the motivations in studying the compression bound, was the idea that if each sample is very large and its distribution is factorized to many parts, one could use the idea of typicality from information theory to find the number of effective functions. In Sec~\ref{sec:problem_with_AEP_arguments}, it was summarized the fallacies in this argument and why it can not be used to derive the generalization bound. 

As this seems to be one of the common pitfalls of using the information theory in practice, most of this section is devoted to dive deeper into these problems. After that we also provide a correct way to %
include this assumption of factorized distribution into the analysis.%

In the rest, for simplicity we assume that the random variable $X$ is the composition of many i.i.d random variables, i.e. $X=(W_1,\dots,W_L)$, and there is a distribution $p(W)$ where 
$\rho(X)=\prod_{i=1}^L p(W_i)$ (it is a special case of factorized distribution which is enough for our next arguments explaining why this additional assumption does not help). In what comes, first it is shown how in first glance it might seem that the $\frac{1}{\epsilon}$ is not needed and after that we show the fallacy %
hidden in the argument.

Typicality is one of the most fundamental concepts in information theory. In its simplest form it can be explained by the simple observation that the law of large numbers can be applied to the random variable $-\frac{1}{L}\log \rho(X)=\frac{1}{L}\sum_{i=1}^L -\log p(W_i)$ to show that it will concentrate around its expected value, i.e. $\EX_W[-\log p(W)]\triangleq H(W)$. More precisely, by weak law of large numbers we can write 
\begin{equation}
\label{eq:AEP_weak_law}
\forall \eta>0; \lim\limits_{L\to\infty}\Pr\{\; |H(W) - \frac{1}{L}\sum_{i=1}^L \widehat{H}(W_i) |>\eta \; \}=0,
\end{equation}
were $\hat{H}(W_i)=-\log p(W_i)$ is a sample from the r.v. we used the law of large numbers to study. 
On the other hand $H(X)=L H(W)$ and by defining $\widehat{H}(X)=-\log \rho(X)=\sum_{i=1}^{L}\widehat{H}(W_i)$ Eq.~(\ref{eq:AEP_weak_law}) can be rewritten as 
\begin{equation}
\label{eq:AEP_weak_law_rewrite}
\forall \eta>0; \lim\limits_{L\to\infty}\Pr\{\; |\widehat{H}(X) - H(X)|>L\eta \; \}=0.
\end{equation}

Now recall that in our analysis, the $\frac{1}{\epsilon}$ factor appeared in proof of Lemma~\ref{lemma:covering_entropy_limited_functions} when bounding the probability of deviation of $\widehat{H}(X)=-\log\rho(X)$ from its expected value. In summary we used Markov inequality 
\begin{equation}
\label{eq:the_markov_ineq_for_entropy}
\Pr\{\; \widehat{H}(X)> \frac{H(X)}{\epsilon} \; \}\le\epsilon; \;\;\epsilon\in(0,1).
\end{equation}
If one could show that in a particular case the l.h.s. is actually zero, 
the $\frac{1}{\epsilon}$ would be completely eliminated from the formulation. 
Actually this might seem to be the case for the factorized distribution as $L\to\infty$. The reason is that the inequality~(\ref{eq:the_markov_ineq_for_entropy}) can be rewritten as 
\begin{align}
\label{eq:rewrite_the_markov_ineq_for_entropy}
\Pr\{\; \widehat{H}(X) -& H(X)> L H(W)(\frac{1}{\epsilon}-1) \; \}\nonumber\\
\le
&\Pr\{\; |\widehat{H}(X) - H(X) | > L (\frac{H(W)(1-\epsilon)}{\epsilon}) \; \},
\end{align}
which by letting $\eta=\frac{H(W)(1-\epsilon)}{\epsilon}$ and using the Eq.~(\ref{eq:AEP_weak_law_rewrite}) approaches $0$ for very large $L$. 

Now 
we demonstrate the error in this line of reasoning. The root of the problem is that in the previous inequalities, $H(X)$ is a function of $L$ and it approaches $\infty$ as $L$ is increased (this would be evident if one use the notation $X_L$ to emphasize the dependence of $\rho(X_L)=\prod_{i=1}^L p(W_i)$ on $L$). In fact we will argue that having $H(X)$ to approach infinity is necessary for a guaranteed concentration. But in the discussion of generalization in previous sections, we were talking about a fixed distribution $\rho(X)$ with finite $H(X)$ (actually we need to have $H(X)\lesssim\log n$ for the generalization bound to be nontrivial, which is $H(X)\le30$ for almost all datasets used in practice today). 

To be more precise, combining inequalities (\ref{eq:AEP_weak_law_rewrite}) and \ref{eq:rewrite_the_markov_ineq_for_entropy}), after substituting the $\lim$ by it's definition we have
\begin{align}
\forall\epsilon\in(0,1); &\forall\delta>0; \exists L_0; \forall L>L_0; 
\nonumber\\
&\Pr\{\; |\widehat{H}(X) - H(X) | > L (\frac{H(W)(1-\epsilon)}{\epsilon}) \; \}<\delta.
\end{align}
The important note in this equation is that  $L_0$ depends on $\epsilon$ (and $\delta$), and thus as $\epsilon$ gets smaller $H(X)$s satisfying the inequality get larger.

One might think that even if $H(X)$ is finite, by just considering that $\widehat{H}(X)$ is the sum of many independent components, there would be some kind of concentration (note that components are non-negative, thus each of them should be very small). To demonstrate that this is not the case, it is insightful to see how Chebyshef inequality, which is the base for one of the proofs of law of large numbers, fails to provide a nontrivial bound. 

By applying Chebyshev's inequality on Eq.~\ref{eq:rewrite_the_markov_ineq_for_entropy}, we get
\begin{align*}
\Pr\left\lbrace\; |\frac{\widehat{H}(X)}{L} - H(W) |\right. >\left.  H(W)(\frac{1-\epsilon}{\epsilon}) \; \right\rbrace
&\le \frac{\frac{1}{L}\sigma_{H(W)}^2}{(H(W) \frac{(1-\epsilon)}{\epsilon})^2}
\\
&=\frac{L\sigma_{H(W)}^2}{H(X)^2 (\frac{(1-\epsilon)}{\epsilon})^2}.
\end{align*}
Now if one considers a finite fixed value of $H(X)$, the bound goes to infinity. This means that we precisely need the $H(X)$ to go to infinity to have concentration, otherwise summing infinitely many independent components which are summed to a finite value does not produce concentration by itself. 

Let's provide a simple concrete example to show that the typicality does not happen. 
Suppose the distribution of each $W_i$ is Bernoulli($p$) and suppose a very large block size of $L=2^{30}$. To have a reasonable value for $H(X)=L H(p)$, the value of $p$ should be small. Let's consider $p=2^{-30}$ which results in having $H(X)\simeq22$. For $X$ to be typical, it means to have the ratio of $1$s close to $p$. In our setting where $p=\slfrac{1}{L}$, we want to see if there is exactly one success in a series of $L$ experiments, where each experiment has the success probability of $\slfrac{1}{L}$. The distribution of number of successes is Binomial($L$,$\frac{1}{L}$) and it is easy to verify that the probability of having more than one success is not zero (in this particular example it is about 0.26). 

The discussion so far reveals that having a factorized distribution does not guarantee a concentration around $H(X)$, thus one have to study the deviation rate. A more interesting question is to ask if this extra assumption can lead to a better bound than the Markov bound of Eq.~(\ref{eq:the_markov_ineq_for_entropy}) and what is the tight bound in this case. This is not an easy question. This problem is related to the general problem of finding "maximal tail probability of sums of nonnegative i.i.d random variables" which is an old problem in probability theory and is still open in the general case 
(\cite{HoeffdingBoundsdistributionfunction1955,Luczakmaximaltailprobability2017}). 

This problem is described as
\begin{equation}
\label{eq:Lth_order_markov} %
m_L(z)\triangleq\sup_Z P(Z_1+\dots +Z_L \ge 1),
\end{equation}
where the supremum is taken over all random vectors $Z=(Z_1,\dots Z_L)$, where $Z_i$s are nonnegative i.i.d variables, such that $\EX[Z_i]\le z$. Note that this is a generalization of Markov inequality where for $L=1$ and $z<1$, we can use the Markov inequality to achieve $M_L(z)=z$ but for $L>1$ this is not always easy.

Now note that by letting $Z_i=\epsilon\slfrac{(-\log p(W_i))}{L H(W)}$, 
the l.h.s of (\ref{eq:the_markov_ineq_for_entropy}) can be written as
\begin{equation}
\Pr\{\; \widehat{H}(X)> \frac{H(X)}{\epsilon} \; \}= \Pr\{\; \sum_{i=1}^L Z_i > 1\;\}.
\end{equation}
Note that here $z=\EX[Z_i] =\frac{\epsilon}{L}$. So in the case where the distribution is factorized to many components (and thus $\widehat{H}(X)$ is some of many nonnegative i.i.d random variables), instead of using the Markov inequality, we can use the specific bound for sum of $L$ random variables  defined in Eq.~(\ref{eq:Lth_order_markov}), i.e.
$$\Pr\{\; \sum_{i=1}^L Z_i > 1\;\} \le m_L(\frac{\epsilon}{L}).$$

As stated before, determining the value of $m_L(z)$ for all $z$ is an open problem, but for the specific range of values which we are interested in, it was recently solved by \cite{Luczakmaximaltailprobability2017}. This is represented in the next theorem.
\begin{theorem}[\cite{Luczakmaximaltailprobability2017}]
For sufficiently large $L$ and for $z\le \frac{1}{2L-1}$ $$m_L(z)=1-(1-z)^L.$$
\end{theorem}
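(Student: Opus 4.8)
The plan is to split the claim into a one-line lower bound and a substantially harder upper bound, the latter resting on an extremal-distribution reduction. For the lower bound, take $Z_i$ with $\Pr(Z_i=1)=z$ and $\Pr(Z_i=0)=1-z$: then $\EX[Z_i]=z$ and $\Pr(Z_1+\dots+Z_L\ge1)=1-(1-z)^L$, so $m_L(z)\ge 1-(1-z)^L$. Since $1-(1-z)^L$ is increasing in $z$, the constraint $\EX[Z_i]\le z$ contributes nothing beyond $\EX[Z_i]=z$, so it suffices to prove the matching upper bound $\Pr(\sum_iZ_i\ge1)\le 1-(1-w)^L$ for every i.i.d.\ nonnegative sequence of mean \emph{exactly} $w\le\frac{1}{2L-1}$; monotonicity in $w$ then gives $m_L(z)=1-(1-z)^L$.

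\textbf{Reductions.} First I would isolate the "large" values: writing $p_1=\Pr(Z_i\ge1)\le w$ and conditioning on $\{\,\exists\,i:Z_i\ge1\,\}$ versus its complement yields the exact identity $\Pr(\sum_iZ_i\ge1)=\bigl(1-(1-p_1)^L\bigr)+(1-p_1)^L\,g_L(\nu)$, where $\nu$ is the law of $Z_i$ given $Z_i<1$ (which has mean $m\le\frac{w-p_1}{1-p_1}$) and $g_L(\nu)=\Pr_\nu(\sum_{i=1}^LZ_i\ge1)$. On that complement every coordinate is $<1$, so the sum reaches $1$ only with at least two nonzero coordinates, which is what makes $g_L$ "second order" small in $m$. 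The crux is then an extremal statement: among laws on $[0,1)$ with a given small mean, $g_L$ is maximized by a two-point law $\{0,c\}$ — equivalently, and more cleanly, the supremum defining $m_L(z)$ is itself attained (or approached) on two-point laws $\Pr(Z_i=c)=z/c$, $\Pr(Z_i=0)=1-z/c$. One proves this by passing to finitely supported laws (weak approximation together with stochastic monotonicity of $\Pr(\sum Z_i\ge1)$ in the summands) and then running a perturbation/exchange argument that moves mass between positive atoms at fixed mean, collapsing the support.

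\textbf{Finishing.} Granting the two-point reduction, the rest is a one-variable computation. For $c\ge1$ one gets $\Pr(\sum_iZ_i\ge1)=1-(1-z/c)^L\le 1-(1-z)^L$, with equality at $c=1$; for $c<1$ one needs $\lceil1/c\rceil\ge2$ successes, and comparing the resulting binomial tail to $1-(1-z)^L$ shows that $\{0,1\}$ dominates precisely in the stated range. The elementary ingredient is the Bonferroni estimate $1-(1-s)^L\ge Ls-\binom{L}{2}s^2$, which is valid for all $s\in[0,\frac{1}{2L-1}]$ since there the terms $\binom{L}{j}s^j$ are decreasing in $j$. Tracking this back through the splitting identity with $u=1-p_1$, $b=1-w$, $t=b/u\in[b,1]$, the required bound becomes $1-t^L\ge\kappa_L(1-t)^2$ on $[b,1]$, i.e.\ $1-(1-s)^L\ge\kappa_L s^2$ on $[0,w]$, where $\kappa_L=\Theta(L^2)$ is the extremal constant from the two-point step; the same Bonferroni estimate closes it.

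\textbf{Main obstacle.} The hard part is the extremal reduction: proving that, for means in this range, the worst i.i.d.\ law is two-point with an atom at $0$. This is a genuine extremal problem of Hoeffding type, and it is also the reason the statement is phrased for $L$ sufficiently large — the extremal constant $\kappa_L$ and the elementary tail estimates each carry lower-order corrections that line up with the clean threshold $\frac{1}{2L-1}$ only asymptotically. Everything downstream of that reduction is routine.
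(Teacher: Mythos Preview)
The paper does not prove this theorem: it is quoted verbatim as a result of \cite{Luczakmaximaltailprobability2017} and then applied with $z=\epsilon/L$. There is therefore no proof in the paper to compare your proposal against.

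On the proposal itself: the overall architecture is the standard one and matches the cited source --- the lower bound via the $\{0,1\}$-valued Bernoulli law is immediate, and the upper bound proceeds by reducing to two-point laws and then optimizing over the single atom location. But what you have written is a plan, not a proof. The extremal reduction to two-point distributions is, as you acknowledge, the entire content of the theorem, and ``perturbation/exchange argument that moves mass between positive atoms at fixed mean, collapsing the support'' is a description of what such an argument would look like, not the argument itself; this step is delicate because the functional $\nu\mapsto\Pr_\nu(\sum Z_i\ge 1)$ is neither convex nor concave in $\nu$, so a naive mass-transport does not obviously move you toward the extremum. In the ``Finishing'' paragraph the key comparison for $c<1$ --- the binomial tail $\Pr(\mathrm{Bin}(L,z/c)\ge\lceil 1/c\rceil)$ versus $1-(1-z)^L$ --- is precisely where the threshold $z\le\frac{1}{2L-1}$ and the ``sufficiently large $L$'' hypothesis enter, and you assert the outcome rather than compute it. (Incidentally, the Bonferroni bound $1-(1-s)^L\ge Ls-\binom{L}{2}s^2$ is valid for all $s\ge 0$ by inclusion--exclusion, so the range restriction you impose there is unnecessary; the genuine constraint on $s$ arises later.) The final chain with $u,b,t,\kappa_L$ is not derived and the claimed inequality $1-(1-s)^L\ge\kappa_L s^2$ with $\kappa_L=\Theta(L^2)$ is false on any interval containing $0$, since the left side is $\Theta(Ls)$ near $0$.
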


In our discussion where we are interested in very large block sizes, assuming that $\epsilon<\frac{1}{2}$, the conditions of the theorem are met and we can use
$m_L(\frac{\epsilon}{L})=1-(1-\frac{\epsilon}{L})^L$. As $L$ gets larger, this term approaches $1-e^{-\epsilon}$. 

To summarize the result, in case the random variable $X$ is composed of infinitely many components, the Markov inequality in Eq.~(\ref{eq:the_markov_ineq_for_entropy}) can be improved to
\begin{equation}
\label{eq:the_markov_ineq_for_entropy_infinitely_factorized_distribution}
\Pr\{\; \widehat{H}(X)> \frac{H(X)}{\epsilon} \; \}\le 1-e^{-\epsilon}; \;\;\epsilon\in(0,1).
\end{equation}
Using Taylor series around $0$, we can write $1-e^{-\epsilon}=\epsilon+o(\epsilon)$. Thus for small $\epsilon$, our original bound did not improve when adding the additional assumption of factorized distribution. For larger $\epsilon$ there is a small improvement which is presented in figure \ref{fig:simplemarkovvsimprovedboundforfactorizeddistribution}.
\begin{figure}
\centering
\includegraphics[width=0.7\linewidth]{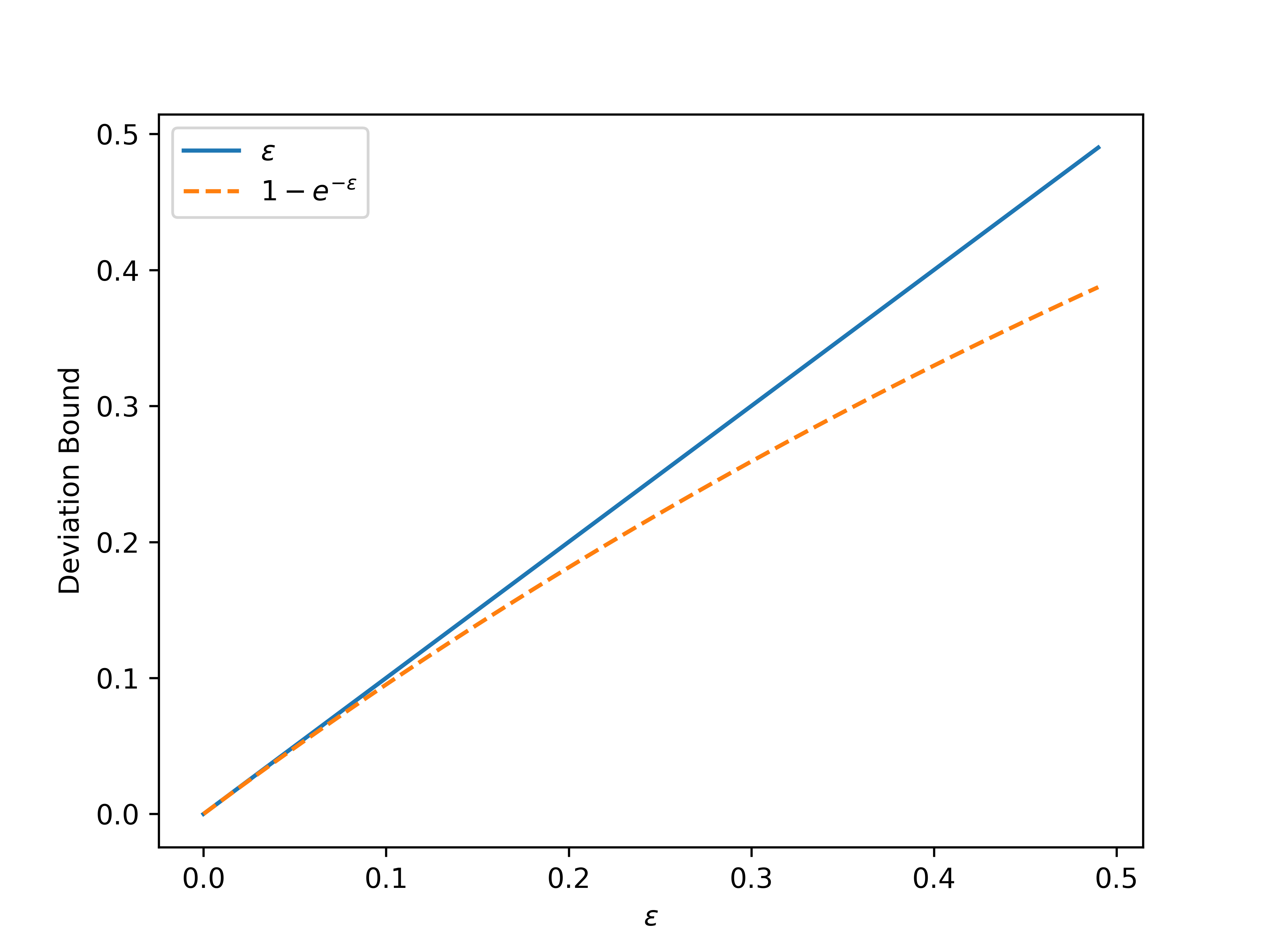}
\caption{The original Markov bound vs the bound achieved when assuming the distribution to be factorized to many parts.}
\label{fig:simplemarkovvsimprovedboundforfactorizeddistribution}
\end{figure}

\end{appendices}

\bibliographystyle{model2-names}
\bibliography{WholeLibrary_BetterBibTex.bib}

\end{document}